\newcommand\setrow[1]{\gdef\rowmac{#1}#1\ignorespaces}
\newcommand\clearrow{\global\let\rowmac\relax}
\algnewcommand\INPUT{\item[\textbf{Input:}]}%
\algnewcommand\OUTPUT{\item[\textbf{Output:}]}%
\definecolor{Gray}{gray}{0.9}
\definecolor{Gray2}{gray}{0.7}
\newcolumntype{g}{>{\columncolor{Gray}}c}
\newcolumntype{G}{>{\columncolor{Gray2}}c}
\def\wt{\widetilde}
\def\wh{\widehat}
\newcommand{\beq}{\begin{equation}}
	\newcommand{\eeq}{\end{equation}}
\newcommand{\bea}{\begin{eqnarray}}
	\newcommand{\eea}{\end{eqnarray}}
\newcommand{\beas}{\begin{eqnarray*}}
	\newcommand{\eeas}{\end{eqnarray*}}
\newcommand{\bct}{\begin{center}}
	\newcommand{\ect}{\end{center}}
\newcommand{\ba}{ {\bf a} }
\newcommand{\bd}{ {\bf d} }
\newcommand{\bJ}{ {\bf J} }
\newcommand{\bp}{ {\bf p} }
\newcommand{\bt}{ {\bf t} }
\newcommand{\bv}{ {\bf v} }
\newcommand{\bw}{ {\bf w} }
\newcommand{\bW}{ {\bf W} }
\newcommand{\bx}{ {\bf x} }
\newcommand{\bX}{ {\bf X} }
\newcommand{\by}{ {\bf y} }
\newcommand{\bY}{ {\bf Y} }
\def\bbR{\mathbb{R}}
\def\bbN{\mathbb{N}}
\newcommand{\bbeta}{ {\boldsymbol \beta} }
\newcommand{\bgamma}{ {\boldsymbol \gamma} }
\newcommand{\bzero}{ {\boldsymbol 0} }
\newcommand*\bigcdot{\mathpalette\bigcdot@{.5}}
\newcommand*\bigcdot@[2]{\mathbin{\vcenter{\hbox{\scalebox{#2}{$\m@th#1\bullet$}}}}}
\def\bzero{\mathbf{0}}
\def\bone{\mathbf{1}}
\def\argmin{\operatornamewithlimits{argmin}}
\def\bbeta{\boldsymbol{\beta}}
\def\cov{\mathrm{Cov}}
\def\bd{\mathbf{d}}
\def\E{\mathrm{E}}
\def\bgamma{\boldsymbol{\gamma}}
\def\bJ{\mathbf{J}}
\def\bx{\mathbf{x}}
\def\bv{\mathbf{v}}
\def\var{\mathrm{Var\,}}
\def\bX{\mathbf{X}}
\def\bY{\mathbf{Y}}
\def\cF{\mathcal{F}}
\def\cI{\mathcal{I}}
\def\cG{\mathcal{G}}
\def\rT{\mathrm{T}}
\newcommand{\cX}{{\cal X}}
\newcommand{\cT}{{\cal T}}
\theoremstyle{plain}
\newtheorem{theorem}{Theorem}[section]
\newtheorem{lemma}[theorem]{Lemma}
\newtheorem{corollary}{Corollary}
\theoremstyle{remark}
\begin{document}

\begin{frontmatter}
 \title{U-learning for Prediction Inference via Combinatory Multi-Subsampling:  With Applications to LASSO and Neural Networks}
\runtitle{U-Learning for Prediction Inference}

\begin{aug}
\author[A]{\fnms{Zhe}~\snm{Fei}\ead[label=e1]{zhef@ucr.edu}}
\and
\author[B]{\fnms{Yi}~\snm{Li}\ead[label=e2]{yili@umich.edu}}
\address[A]{Department of Statistics, UC Riverside\printead[presep={,\ }]{e1}}

\address[B]{Department of Biostatistics, University of Michigan\printead[presep={,\ }]{e2}}
\end{aug}

\begin{abstract}

  We introduce a novel U-learning approach  via combinatory multi-subsampling for making ensemble predictions and   constructing confidence intervals for predictions of continuous outcomes when existing asymptotic methods are not applicable. More specifically, our approach conceptualizes the ensemble estimators within the framework of generalized U-statistics and invokes the H\'ajek projection for deriving the variances of predictions and constructing confidence intervals with valid conditional coverage probabilities. We apply our approach to two commonly used predictive algorithms, LASSO for high dimensional linear models and deep neural networks (DNNs) for non-linear models. We illustrate the validity of inferences with extensive numerical studies. 
  We have applied these methods to predict the DNA methylation age  of patients with various health conditions using over 37,000 CpG (Cytosine-phosphate-Guanine) sites within their genome, aiming to accurately characterize the aging process and potentially guide anti-aging interventions. 
\end{abstract}

\begin{keyword}[class=MSC]
\kwd[Primary ]{62G09}
\kwd{62G20}
\kwd[; secondary ]{62H15, 05A16, 68T07, 62J07, 92D10}
\end{keyword}

\begin{keyword}
\kwd{Ensemble Learning}
\kwd{DNN}
\kwd{Incomplete Generalized U-Statistics}
\kwd{Subsampling}
\end{keyword}

\end{frontmatter}


	\section{Introduction}\label{sec1}

 Epigenetic aging clocks, composed of CpG sites and their DNA methylation levels, are designed to predict the DNA methylation age of subjects. While strongly correlated with  chronological age,   DNA  methylation age   provides a more nuanced reflection of the aging process, indicating whether a person is aging slower or faster than  expectated.  There is a pressing need to obtain accurate predictions as well as to quantify the uncertainty of these predictions. More broadly, the challenge lies in making predictions and inferences with complex, often less-interpretable machine learning models, a common issue in modern learning tasks.
 
  Much progress has been achieved in high-dimensional inference, particularly in the realm of estimating parameters whose dimensionality surpasses the sample size. The methods encompass de-biased approaches \citep{zhang2014confidence, javanmard2014confidence}, post-selection exact inference \citep{lee2016exact, belloni2016post}, and many more \citep{ning2017general, fei2019drawing, zhu2018linear, fei2021estimation}. Moreover, extensions to non-linear models have also been explored \citep{van2014asymptotically, kong2021high, fei2021inference}. { However, none of these works address the uncertainty associated with individual predictions, as they aim to make inferences about model parameters or the effects of predictors, which fundamentally differs from inferring subject-specific predictions.}
  
   A notable area that connects high-dimensional inference with prediction inference is the task of inferring treatment effects amidst high-dimensional confounders \citep{belloni2014high, belloni2014inference,avagyan2021high}. This is  because treatment effects are often estimated as the differences in outcomes under different treatment assignments.  Important developments include the methods based on regression trees and random forests  \citep{wager2018estimation},  debiasing \citep{athey2018approximate}, and  estimating dynamic treatment effects within marginal structural models  \citep{bradic2021high}.   
   
Recent research in prediction inference yields several techniques, including a residual-based bootstrap method with adjustment to guarantee conditional validity and marginal coverage \citep{zhang2023bootstrap}; the distribution-free conformal inference \citep{lei2018distribution, angelopoulos2021gentle,kato2023review}; extension of the Jackknife inference \citep{kim2020predictive,barber2021predictive}; applications of sub-sampling and U-statistics to regressions and neural networks \citep{schupbach2020quantifying,wang2022quantifying};
  and lower upper bound estimation method for  neural networks-based prediction intervals \citep{khosravi2010lower}. 
 Extending beyond continuous outcomes, \cite{guo2021inference} focuses on inference for the case probability in high-dimensional logistic regressions, using an extension of the debiased/projection estimator.
 Limitations are present with these methods. For example,  the bootstrap-based approach detailed in \cite{zhang2023bootstrap} is primarily applicable to linear regression models, which may not cover the full spectrum of prediction scenarios; \cite{schupbach2020quantifying}, \cite{wang2022quantifying} and \cite{khosravi2010lower}  provide limited theoretical justifications of their methods.  \cite{mentch2016quantifying} derives theoretical results and properties of U-statistics for quantifying prediction uncertainty in random forests. However, it is unclear whether these theories can be directly applied to Lasso or deep learning algorithms.  Various conformal prediction methods often impose   assumptions like exchangeability, strongly mixing errors, or covariate shift \citep{papadopoulos2007conformal,tibshirani2019conformal,romano2020classification,chernozhukov2021distributional}, and tend to be conservative with  wide intervals. Moreover, since conformal prediction  derives the conformity score based on the \emph{observed outcomes}, it fails to distinguish measurement errors within the observations. This would result in less efficiency compared to  methods that directly model the distribution of \emph{true outcomes}.  

To tackle these challenges, we aim to predict future outcomes and accurately assess the associated uncertainty. This is particularly pertinent with less structured algorithms where established inference techniques are not readily available. In the motivating epigenetic project, we use penalized regressions and deep neural networks to predict subject-specific methylation ages, and it is crucial to determine whether these predictions are statistically significantly different from the chronological ages.
To improve prediction accuracy with high dimensional features, we propose an ensemble approach that originates from U-statistics, and resort to the H\'ajek projection properties for variance estimation and inference. The ensemble prediction improves the efficiency and leads to tractable asymptotics and valid inferences. Intuitively, the proposed approach extends random forests, which ensemble decision trees, to the ensembles of regressions and neural networks.

Our method offers several  advantages over existing approaches: 
 i) our ensemble learners are asymptotically normal, a result proven through a careful derivation from generalized U-statistic properties \citep{mentch2016quantifying}; ii) our variance estimator is model-free, requiring minimal assumptions, owing to the Hájek projection properties \citep{hoeffding1992class,hajek1968asymptotic,wager2018estimation}; iii) our confidence intervals are tailored to individual future subjects, providing \emph{conditional} coverage, in contrast to \emph{marginal} coverage at the population level, thereby enhancing  applicability. A contribution  is the formulation of ensemble predictions,  based on  Lasso or neural networks, as incomplete U-statistics,  leading to the tractable asymptotics \citep{janson1984asymptotic,frees1989infinite,lee2019u}. 

In Section 2, we introduce the models and proposed methods. We discuss the theoretical properties in Section 3 and  present the numerical examples in Section 4. Section 5 analyzes the DNA methylation data, illustrating the development of new epigenetic aging clocks using our proposed methods. Section 6  concludes the paper with  remarks and outlines of future  directions. 

\section{U-learning for Prediction Inference}\label{sec2}
	
	Assume  the observed data of size $n$, $\cT_n = \{Z_i = (y_i, \bx_i): i=1, \ldots, n\}$, are independently and identically distributed (i.i.d.) copies of $(y, \bx)$, with  $y \in \bbR^{1}$ and $\bx \in \bbR^{1 \times p}$ 
 satisfying
	\begin{equation}\label{m1}
		y = f_0(\bx) + \varepsilon,
	\end{equation}
	where $f_0: \bbR^{p} \rightarrow \bbR$ is a  function quantifying the dependence of the expectation of $y$ on $\bx$,  and $\varepsilon$ is a mean zero error term  that is  independent of $\bx$. We assume that $\bx$ follows an unspecified  distribution of $\cX$.
{ We define a compatible prediction problem: given a fixed 
 $\bx_*$ in the support of $\cX$, 
 suppose the associated outcome $y_*$ follows (\ref{m1}) but is unobserved. The goal of the paper centers on  predicting $\E(y_*| \bx_*)= f_0( \bx_*)$ based on $\cT_n$.
If we obtain a `good' estimate of $f_0(\cdot)$, denoted by  $\wh{f}(\cdot)$, from Lasso or a deep neural network (DNN), we would predict $f_0(\bx_*)$ via $\wh{f}(\bx_*)$. To quantify the prediction uncertainty, we will derive a confidence interval $\left( \wh{L}(\bx_*), \wh{U}(\bx_*) \right)$ that covers  $f_0(\bx_*)$ with a target probability of $ 0 < 1-\alpha < 1$.} That is,   
\[
\Pr\left[ f_0(\bx_*) \in \left( \wh{L}(\bx_*), \wh{U}(\bx_*) \right) \right] \rightarrow 1 - \alpha, \quad\text{as } n\rightarrow \infty.
\]
Intuitively, if  $\wh{y}_* = \wh{f}(\bx_*)$ is a consistent estimate of $f_0(\bx_*)$, 
and is asymptotically normal, an approximate confidence interval based on a normal distribution can be obtained by estimating the {\em prediction error variance}, $ \sigma_*^2 \equiv \var \left( \wh{f}(\bx_*)\right)$.  In machine learning, however, making predictions based on a single dataset typically leads to significant variations, and  its large sample properties are often intractable \citep{andreassen2020asymptotics,zavatone2021asymptotics}. 

As a general approach, we propose below a combinatory multi-subsampling (CMS) scheme that provides an ensemble prediction with tractable asymptotics and model-free variance estimates.
This scheme can be applied to many commonly used machine learning algorithms, including Lasso and neural networks. 
For the index set \(\cI = \{1,\ldots, n \}\) of the training data \(\cT_n\),  there are $B^*=\binom{n}{r}$ combinations of unique subsets of size $r (< n)$. We enumerate these unique subsets from 1 to
 $B^*$ (e.g., in the lexical order), and denote each one as {$\cI^{\mathring{b}} = \{i_1,\ldots, i_r \}$ with $i_1 <\ldots <i_r$ and $\mathring{b}=1, \ldots, B^*$. 
\begin{enumerate}
   \item  For each $\mathring{b} = 1,2,..,B^*$, subsample $r$ observations (without replacement) with the index set {$\cI^{\mathring{b}} = \{i_1,\ldots, i_r \}$}   from the original samples $\cT_n$. Denote by  
   $\cI_V^{\mathring{b}} =  \cI\setminus \cI^{\mathring{b}}$.
   
   \item  Train the model with observations indexed by $\cI^{\mathring{b}}$, and use 
   observations indexed by $\cI_V^{\mathring{b}}$ for model selection/hyperparameter tuning. Denote the trained model by $\wt{f}^{\mathring{b}}$.  

    \item Apply $\wt{f}^{\mathring{b}}$ to a testing point $\bx_*$ and denote the prediction as $\wt{f}^{\mathring{b}}(\bx_*)$. Compute the ensemble prediction as $$\wh{f}^{B^*}(\bx_*) = \frac{1}{B^*}\sum_{{\mathring{b}}=1}^{B^*}\wt{f}^{\mathring{b}}(\bx_*).$$ 
    \item Compute the variance estimator by infinitesimal jackknife (introduced in the next subsection) as $\wh{\sigma}^2_*$ and derive the confidence interval for prediction (CIP) as
    \[
    \left( \wh{f}^{B^*}(\bx_*) - c_\alpha \wh{\sigma}_*, 
    \wh{f}^{B^*}(\bx_*) + c_\alpha \wh{\sigma}_*
    \right),
    \]
where $c_\alpha$ controls the $(1-\alpha)$ confidence level with  a given $0<\alpha<1$.
\end{enumerate}
We will use generalized U-statistic theories to derive the asymptotic properties of $\wh{f}^{B^*}(\bx_*)$ and to draw inference,  
thus our procedure is termed \emph{U-learning} for prediction and inference. In practice, we allow $r$  to increase along with $n$ (denoted as $r_n$). As computing all $B^*$ models is not feasible even with moderate $r$ and $n$, we propose a stochastic approximation with a suitable $B$ for implementing the U-learning procedure.

 \subsection{Application to prediction with the LASSO}
 Consider a linear version of Model (\ref{m1}), i.e.,
 \begin{equation}\label{m_linear}
 \begin{aligned}
		y_i = f_0(\bx_i) + \varepsilon_i
  = \beta_0^0 + \bx_i\bbeta^0 + \varepsilon_i,  \,\, i=1, \ldots, n, 
 \end{aligned}
 \end{equation}
where $\bx_i = (x_{i1}, \ldots, x_{ip})$, and $f_0(\bx)= \beta_0^0 + \bx \bbeta^0$. 
{Denote the response vector and design matrix of $\cT_n$ by $\bY= (y_1, \ldots, y_n)^T$ and   $\bX= (\bx_1^\rT, \ldots, \bx_n^\rT)^\rT$, respectively.}
For $\mathring{b} = 1,2,..,B^*$,
denote by $\cT^{\mathring{b}} = \left(\bY_{\cI^{\mathring{b}}}, \bX_{\cI^{\mathring{b}}} \right)$, where $\bY_{\cI^{\mathring{b}}}$ and $\bX_{\cI^{\mathring{b}}}$ are, respectively, a subvector of $\bY$ and
 a submatrix of $\bX$ with rows indexed by   $\cI^{\mathring{b}}$. 
 In  high dimensional settings, where classic  linear regressions are not applicable, we apply Lasso  to fit model (\ref{m_linear}) and obtain the estimates of $\beta_0^0$ and  $\bbeta^0$:
	\begin{equation}\label{lasso1}
	   \left(\wt{\beta}_0^{\mathring{b}}, \wt{\bbeta}^{\mathring{b}} \right) = \argmin_{ {(\beta}_0, \bbeta): ||\bbeta||_1 < K}\| \bY_{\cI^{\mathring{b}}} -  {\beta}_0 \bone_r- \bX_{\cI^{\mathring{b}}}\bbeta \|^2_2,
	\end{equation}
	where $\bone_r$ is an $r$-dimensional vector of 1's,
and  $K$ is a pre-defined positive constant. This formulation is  equivalent to the unconstrained optimization problem of ``Loss + Penality''. Then for any fixed $\bx_*$ in the support of $\cX$, the Lasso prediction based on the subsample $\cT^{\mathring{b}}$ is 
\begin{equation}\label{lp_sub}
    \wt{y}^{\mathring{b}}_* = \wt{f}\left(\bx_*; \cT^{\mathring{b}}  \right) = \wt{\beta}_0^{\mathring{b}} + \bx_*\wt{\bbeta}^{\mathring{b}}.
\end{equation}
In general, the Lasso prediction $\wt{y}_*^{\mathring{b}}$ based on a single dataset incurs considerable variation, and its distribution is not tractable \citep{fu2000asymptotics}, even when the sample size is large, thus making inference rather challenging. A key observation, however, is  that with a fixed $K$, $\wt{y}_*^{\mathring{b}}$ is permutation symmetric with respect to $\cI^{\mathring{b}}$.
Hence, the ensemble prediction \begin{equation}\label{y_inf}
	    \wt{y}_*^\infty = \left( \sum_{{\mathring{b}}=1}^{B^*}\wt{y}_*^{\mathring{b}}\right) /\binom{n}{r}
     \end{equation}
	is indeed a generalized U-statistic \citep{lee2019u}, 
 which can be shown to be a consistent estimator of $f_0(\bx_*)$ and is asymptotically normal. Moreover, its U-statistic structure
  leads to an estimate of its variance via the H\'ajek projection, facilitating construction of confidence intervals. The formulation of (\ref{y_inf}) is general; 
  when $r = n-1$ and thus $B^*= n$,  (\ref{y_inf}) includes the Jackknife estimate as a special case.
 

Computing $\wt{y}_*^\infty$ requires fitting $\binom{n}{r}$ regression models, which is impractical even with moderate $r$ and $n$.  We propose a stochastic approximation of (\ref{y_inf}) with a sufficiently large $B$.  That is,  randomly draw $B$  size-$r$ subsamples from the original data  $\cT_n$. In term of probability, 
this can be described as independently generate $b_1, \ldots, b_B$, where each $b_j$ is chosen from  1 to ${\binom{n}{r}}$ with equal probability,  i.e., 
$\Pr(b_j= \mathring{b} ) = 1/{\binom{n}{r}}$ for $\mathring{b} \in  \{1, \ldots, {\binom{n}{r}}\}$,  and the $j$-th  size-$r$ subsample is indexed by $\cI^{b_j},j = 1, \ldots, B$.

With this notion and as described in Algorithm \ref{alg_pred}, we propose the U-learning prediction
	\begin{equation}\label{fhat_B}
	\wh{y}^B_* 
 = \frac{1}{B}\sum_{j=1}^{B}
( \wt{\beta}_0^{b_j} + \bx_*\wt{\bbeta}^{b_j})
 = \frac{1}{B}\sum_{j=1}^{B} \wt{y}^{b_j}_*,
	\end{equation}
	where each $\wt{\beta}_0^{b_j}$ and $\wt{\bbeta}^{b_j}$ are the Lasso estimates from (\ref{lasso1}) based on the sub-sampled data $\cT^{b_j}$. 
By the law of large numbers, (\ref{fhat_B}) approximates (\ref{y_inf}) well as $B$ increases.  With a U-statistic framework, we term the prediction (\ref{fhat_B}) \emph{Lasso U-learning}.
 
  

As (\ref{fhat_B}) is an ensemble prediction based on subsampling, we apply the infinitesimal jackknife method \citep{wager2018estimation,fei2021estimation} to obtain a variance estimator of 
 $\wh{f}^B(\bx_*)$ as
	\begin{equation}\label{formula:var}
	   \wh{\sigma}_*^2 = 
    \frac{n-1}{n} \left(\frac{n}{n-r} \right)^2 \sum_{i=1}^{n}\wh{\cov}_{i,*}^2 
	\end{equation}
	where the sampling covariance is
	\begin{equation*}\label{cov}
	    \wh{\cov}_{i,*} = \frac{\sum_{j=1}^B (J_{b_ji} - J_{\cdot i})(\wt{y}^{b_j}_* - \wh{y}^B_* ) }{B},
	\end{equation*}
	where $J_{b_ji} = I(i\in \cI^{b_j})$ and $J_{\cdot i}= \frac{1}{B} \sum_{j=1}^B J_{b_ji}$.

\begin{algorithm}
	\caption{Lasso U-learning prediction and inference \label{alg_pred}}
	\begin{algorithmic}[1]
		\REQUIRE Subsampling size  $r$, and number of subsamples $B$
		\INPUT Training data $\cT_n$, Lasso constraint $K$, and a fixed test point $\bx_*$
		\OUTPUT Prediction  $\wh{y}^B_*$ and $100(1-\alpha)\%$ confidence interval $\left( \wh{L}(\bx_*), \wh{U}(\bx_*) \right)$
	\FOR{$j=1,2,\ldots,B$}
		\STATE 
  Random subsample data of size $r$ without replacement.  Denote by $\cT^{b_j}$ the resampled data (indexed by $\cI^{b_j}$), and save the sampling vector $\bJ_{b_j}
  = (J_{b_j1}, \ldots, J_{b_jn})$
		\STATE Fit Lasso on $\cT^{b_j}$ with the constraint constant $K$ and output  $ \wt{\beta}_0^{b_j}, \wt{\bbeta}^{b_j}$ 
		\STATE Predict $\wt{y}^{b_j}_* =  \wt{\beta}_0^{b_j}+ \bx_*\wt{\bbeta}^{b_j}$
		 \ENDFOR
		\STATE Compute $\wh{y}^B_*$ by (\ref{fhat_B})  and $\wh{\sigma}^2_*$ by  (\ref{formula:var}) 
  \STATE Compute a $100(1-\alpha)\%$ confidence interval for $f_0(\bx_*)$ as 
  \(
  \left(\wh{y}^B_* - z_{1-\alpha/2}\wh{\sigma}_*, \wh{y}^B_* + z_{1-\alpha/2} \wh{\sigma}_*\right)
  \)
	\end{algorithmic}
\end{algorithm}
    
 \subsection{Application to prediction with deep neural networks}

Deep neural networks have emerged as a powerful means for nonparametric regression.
We use   neural networks  to fit a nonparametric version of Model (\ref{m1}), where  $f_0(\cdot):[0,1]^p \rightarrow \bbR$ is an unknown truth satisfying certain smoothness conditions.
The basic unit of a neural network is a neuron, which takes input values, processes them using weights and biases (intercepts), and produces an output value. Denote the input values as a row vector $\bx$, the weights as column vector $\bw$ and the bias term as $a$. The output of a neuron is calculated by applying an activation function, denoted by  $\sigma: \bbR \rightarrow \bbR$, to a weighted sum of its inputs plus an intercept (bias): $v = \sigma( \bx\bw + a)$. For the purpose of the theoretical derivations, we restrict to the ReLU activation function in this paper, i.e., $\sigma(x) = \max(x, 0)$. 

A multilayer neural network or a deep neural network (DNN) consists of multiple layers of interconnected neurons with a specific network architecture $(L, \bp)$, where $L$ is the number of hidden layers, and $\bp = \left(p_0, p_1, ..,p_{L+1} \right) \in \bbN^{L+2}$ the width vector \citep{schmidt2020nonparametric}. Then a DNN can be expressed as
\begin{gather}
    f: \bbR^{p_0} \rightarrow \bbR^{p_{L+1}}, \nonumber \\
    \bx \mapsto f(\bx) = \bW_L\sigma_L(\bW_{L-1}\sigma_{L-1}(..\bW_1\sigma_1(\bW_0\bx^\rT + \ba_0)+\ba_1)..+\ba_{L-1})+\ba_L, \label{nnclass}
\end{gather}
where $\bW_\ell$'s are $p_{\ell+1} \times p_\ell$ weight matrices and $\ba_\ell$'s are the intercept vector, and $\sigma_\ell(\cdot)$'s are activation functions operating on vectors
element-wise. In our setting,  $p_0 = p$ and $p_{L+1} = 1$.
We denote $\cF(L,\bp)$ as the class of functions defined in (\ref{nnclass}) 
 with $L$ hidden layers and the width vector $\bp$.
There is a natural extension of Algorithm \ref{alg_pred} with DNNs, termed DNN U-learning, which  is detailed in Algorithm \ref{alg_nn}. 

\begin{algorithm}
	\caption{DNN U-learning prediction and inference\label{alg_nn}}
	\begin{algorithmic}[1]
		\REQUIRE DNN architecture $(L,\bp)$, subsample size  $r < n$, number of subsamples $B$
		\item[\textbf{Input:}] Training data $\cT_n$, a test point $\bx_*$
		\item[\textbf{Output:}] Prediction $\wh{y}^B_*$ and $100(1-\alpha)\%$ confidence interval $\left( \wh{L}(\bx_*), \wh{U}(\bx_*) \right)$
	\FOR{$j=1,2,\ldots,B$}
		\STATE  Random subsample data of size $r$ without replacement.  Denote by $\cT^{b_j}$ the resampled data (indexed by $\cI^{b_j}$), and save the sampling vector $\bJ_{b_j}
  = (J_{b_j1}, \ldots, J_{b_jn})$
		\STATE Fit DNN$(L,\bp)$ on $\cT^{b_j}$ with $\cT\setminus \cT^{b_j}$ as the validation set  
  \STATE Denote the fitted model as $\wt{f}^{b_j}$ 
		 \ENDFOR
		\STATE Compute $\wh{y}^B_* = \frac{1}{B}\sum_{j=1}^{B} \wt{f}^{b_j}(\bx_*)$ and $\wh{\sigma}^2_*$ by (\ref{formula:var})
   \STATE Compute the confidence interval for $f_0(\bx_*)$ as 
  \(
  \left(\wh{y}^B_* - z_{1-\alpha/2}\wh{\sigma}_*, \wh{y}^B_* + z_{1-\alpha/2} \wh{\sigma}_*\right)
  \)
	\end{algorithmic}
\end{algorithm}

\section{Theoretical Properties}\label{sec3}

 \subsection{Lasso U-learning} 
	We  will show that the U-learning prediction (\ref{fhat_B}) 
 is consistent and asymptotically normal, as well as the consistency of the variance estimator (\ref{formula:var}).
We start by stating the \emph{Restricted Eigenvalue (RE)} condition which is commonly used  in the  Lasso literature and will be used for formulating our regularity conditions. Given a set \( S \subseteq \{1, 2, ..., p\} \) and a positive number \( c_\alpha \), $\bX$ is said to satisfy the RE condition if there exists a constant \( \kappa > 0 \) such that for all vectors \( \Delta \in \mathbb{R}^p \) satisfying \( \| \Delta_{S^c} \|_1 \leq c_\alpha \| \Delta_S \|_1 \), it holds that 
$\frac{1}{n} \| \bX\Delta \|_2^2 \geq \kappa \| \Delta \|_2^2.$
The RE condition essentially controls  multicollinearity among the predictors, and the eigenvalues of submatrices of $\bX^\rT\bX$. 
As such,  we state the following conditions  required for our theorems.
\begin{enumerate}[label=(C\arabic*)]
    \item \label{C1} The errors $\varepsilon_i$'s are independent of $\bx_i$'s and i.i.d. with  mean zero and a finite variance $\sigma^2_\varepsilon$.
    Further, there exists a constant $M>0$, such that $\|\bx_{i}\|_\infty < M$ almost surely for all $i$. The design matrix $\bX$ satisfies the RE condition with constants $(\kappa, 3)$.
    \item \label{C2} The subsampling size $r_n$ satisfies 
     $  \lim_{n\rightarrow \infty} r_n = \infty$ and
  $\lim_{n\rightarrow \infty} r_n \log(p) /n = 0$. 
 (In the ensuing theoretical development, we introduce a subscript $n$ to  $r$ to emphasize its reliance on the sample size $n$.)
    \item \label{C3} There exists a constant $K_0$, such that $|\bbeta^0|_1 =  \sum_{j=1}^{p} |\beta_j^0| < K_0$. Let $\{ K_n\}_{n\ge 1}$ be a sequence of constants satisfying $K_n > K_0$  for the Lasso problem (\ref{lasso1}), where given the training set, the same $K_n$ is used to fit (\ref{lasso1}) on all subsamples $\cT^b, b=1,2,..,B$.
\end{enumerate}
Condition \ref{C1} is standard in restricting the input covariates of the true model. {Condition \ref{C2} stipulates the  relationship between the the order of  $r_n, n$ and $p$. 
} Condition \ref{C3} is needed for the prediction consistency of Lasso and ensures that there is no additional randomness in fitting the Lasso on $\cT^b$. 
 Moreover, {diverging from the conventional high-dimensional literature, we do not impose sparsity on the coefficient vector ${\bbeta}^0$, as our focus is on predictive inference rather than drawing inferences about individual effects.} We  first present a result on  prediction consistency of Lasso
\citep{chatterjee2013assumptionless}.
	\begin{lemma}\label{lem1} Under  \ref{C1}, and for $K_n > K_0$ as defined in \ref{C3}, the prediction $\wt{y}_*^{\mathring{b}}$ based on  (\ref{lasso1}) satisfies
	\begin{equation*}\label{lasso_consist}
	    \E\left[(\wt{y}_*^{\mathring{b}} - f_0(\bx_*))^2\right] \le 2K_n M\sigma_\varepsilon \sqrt{\frac{2\log(2p)}{n}} + 8K_n^2M^2\sqrt{\frac{2\log(2p^2)}{n}}.
	\end{equation*}
	\end{lemma}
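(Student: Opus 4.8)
The plan is to leverage the "assumptionless" prediction bound of \cite{chatterjee2013assumptionless} for the constrained Lasso and then adapt it to our subsampled, constrained formulation (\ref{lasso1}). First I would recall the structure of that result: for the constrained estimator $\wt{\bbeta}$ minimizing $\|\bY - \beta_0\bone - \bX\bbeta\|_2^2$ over $\|\bbeta\|_1 \le K$, one obtains an in-sample prediction-error bound of the form $\frac{1}{n}\|\bX(\wt{\bbeta} - \bbeta^0)\|_2^2 \lesssim K \cdot (\text{noise level}) \cdot \sqrt{\log p / n}$, which follows from the basic inequality $\|\bY - \hat{\bmu}\|_2^2 \le \|\bY - \bmu_0\|_2^2$ (using that $\bbeta^0$ is feasible when $K_n > K_0$, by \ref{C3}), rearranged to $\|\hat{\bmu} - \bmu_0\|_2^2 \le 2\beps^\rT(\hat{\bmu} - \bmu_0)$, and then controlling the right side by Hölder/duality: $|\beps^\rT\bX\bDelta| \le \|\bX^\rT\beps\|_\infty \|\bDelta\|_1 \le 2K_n \|\bX^\rT\beps\|_\infty$. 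Under \ref{C1}, $\|\bx_i\|_\infty < M$ a.s., so each column of $\bX$ has entries bounded by $M$, and $\E\|\bX^\rT\beps\|_\infty$ is controlled by a maximal-inequality / sub-Gaussian-type bound giving the $M\sigma_\varepsilon\sqrt{2\log(2p)/n}$ and $M^2\sqrt{2\log(2p^2)/n}$ terms (the squared term arising from the quadratic piece when one does not assume the noise is exactly sub-Gaussian but only has finite variance, handled via a truncation or second-moment argument as in \cite{chatterjee2013assumptionless}).

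Next I would move from the in-sample bound to the pointwise bound at $\bx_*$. Since $\wt{y}_*^{\mathring{b}} - f_0(\bx_*) = \bx_*(\wt{\bbeta}^{\mathring{b}} - \bbeta^0) + (\wt{\beta}_0^{\mathring{b}} - \beta_0^0)$, and both $\wt{\bbeta}^{\mathring{b}}$ and $\bbeta^0$ lie in the $\ell_1$-ball of radius $K_n$, we have $\|\wt{\bbeta}^{\mathring{b}} - \bbeta^0\|_1 \le 2K_n$; combined with $\|\bx_*\|_\infty < M$ this gives $|\bx_*(\wt{\bbeta}^{\mathring{b}} - \bbeta^0)| \le 2K_n M$ deterministically, but that crude bound is not enough — we need the $\sqrt{\log p / n}$ rate. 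The cleaner route is to absorb $\bx_*$ into the design: note that the bound in the lemma is stated for an arbitrary fixed $\bx_*$ in the support, and one can invoke the version of \cite{chatterjee2013assumptionless}'s argument where the "test point" is handled by noting $f_0$ is linear, so the excess risk at a single $\bx_*$ drawn from (or fixed within) the support is dominated by the integrated/in-sample risk up to constants depending on $M$. I would either cite their Corollary directly (their prediction-error result is stated for the persistence-type risk $\E[(\bx_*\hat\bbeta - \bx_*\bbeta^0)^2]$ with $\bx_*$ an independent copy) or reproduce the short duality argument, taking expectations over $\beps$ and over the subsample (the subsample being exchangeable and $\wt{y}_*^{\mathring{b}}$ permutation-symmetric in $\cI^{\mathring{b}}$, so the bound on a size-$r$ subsample transfers with $n$ replaced by... ).

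The main obstacle I anticipate is precisely this last bookkeeping point: the subsampled Lasso (\ref{lasso1}) is fit on $r = r_n$ observations, not $n$, so a literal application of the bound gives rates in $\sqrt{\log p / r_n}$, whereas the lemma is stated with $n$ in the denominator. I would need to check whether (i) the lemma implicitly intends $n$ to play the role of the subsample size (a notational convention, since later the ensemble is what matters and $r_n/n \to$ const is not assumed, only $r_n \log p / n \to 0$), or (ii) there is an averaging effect I should exploit. Given that $r_n \le n$, a bound with $r_n$ in the denominator is \emph{weaker} than one with $n$, so if the authors genuinely want $n$ they must be relying on $r_n \asymp n$ or simply using $1/n \le 1/r_n$ in the wrong direction — more likely the statement should read $r_n$ (or $r$) in place of $n$, and I would prove it that way, flagging that under \ref{C2} the right-hand side still tends to $0$. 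Aside from that, the remaining work — the sub-exponential maximal inequality for $\|\bX^\rT\beps\|_\infty$ under only finite-variance noise, yielding the two-term bound — is routine and I would cite \cite{chatterjee2013assumptionless} rather than grind through the truncation.
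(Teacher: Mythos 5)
Your proposal is correct and takes essentially the same route as the paper: the paper gives no separate proof of Lemma \ref{lem1}, presenting it as a restatement of the constrained-Lasso prediction bound of \cite{chatterjee2013assumptionless}, i.e., exactly the feasibility-plus-basic-inequality-plus-$\ell_1$-duality argument you sketch. Your bookkeeping flag is also apt: since (\ref{lasso1}) is fit on a subsample of size $r$, a literal application of that bound yields the rate $\sqrt{\log p/r_n}$, so the $n$ in the lemma should be read as the size of the data entering the fit, and under \ref{C2} the right-hand side still tends to zero.
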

{\em Remark:} To ensure the consistency of the ensemble prediction, the Lemma establishes the prediction consistency for each subsample under conditions much weaker than for the ``estimation consistency.'' For example, Lasso yields shrinkage coefficient estimators that are biased and require stringent conditions for consistent estimates or model selection \citep{zhao2006model,chatterjee2011strong};
in contrast, Lasso prediction consistency can be readily obtained from past works with much weaker assumptions \citep{chatterjee2013assumptionless,bartlett2012l,rigollet2011exponential,buhlmann2011statistics}. 

Extending the discussion of the variance of a  U-statistic in
\cite{hoeffding1992class}, 
we define
\[
\xi_{1,r_n}(\bx_*) = \cov\left(\wt{f}(\bx_*; Z_1,Z_2,..,Z_{r_n}), \wt{f}(\bx_*; Z_1,Z_2'..,Z_{r_n}') \right)
\]
as the covariance of the Lasso predictions
(\ref{lp_sub})
on two subsamples with one sample in common. Because the data points $Z_i, Z_i'$'s follow the same data generation process, and given that both predictions are asymptotically consistent, it would be reasonable to assume  $\liminf \xi_{1,r_n}(\bx_*) > 0$.
As such, we  present the first main result on the asymptotic normality of the Lasso U-learning prediction. 

 \begin{theorem}\label{thm_normality}
	Given the training data $\cT_n = \{Z_i = (y_i, \bx_i): 1\le i \le n\}$ drawn from model (\ref{m_linear}) and a fixed  $\bx_*$ in the support of $\cX$, the distribution generating $\bx_i$, let $\wh{y}_*^B = \wh{f}^B(\bx_*)$ be  the U-learning prediction defined as in (\ref{fhat_B}) and Algorithm \ref{alg_pred}.  
 Then under   \ref{C1} - \ref{C3}, as $n \rightarrow \infty$ and $n/B \rightarrow 0$,
 \begin{itemize}
     \item [i)]	 \begin{equation*}
	    \frac{\sqrt{n}\left(\wh{y}_*^B - f_0(\bx_*) \right) }{ v_n(\bx_*) } \xrightarrow{d} N(0,1)
	\end{equation*}
	where  $v_n(\bx_*) = \sqrt{r_n^2 \xi_{1,r_n}(\bx_*) }$.
\item[ii)]     $n\wh{\sigma}_*^2/ v_n^2 \xrightarrow{p} 1$.
 \end{itemize}
	\end{theorem}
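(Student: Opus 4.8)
The plan is to treat $\wh{y}_*^B$ as an \emph{incomplete} generalized U-statistic whose kernel is the (permutation-symmetric, by \ref{C3}) Lasso prediction map $\wt{f}(\bx_*;\cdot)$ of order $r_n$, and to combine the H\'ajek projection machinery for growing-rank U-statistics \citep{mentch2016quantifying,frees1989infinite} with the sub-sampling noise from using only $B$ of the ${n\choose r_n}$ subsets. Concretely, let $h_{r_n}(z_1,\dots,z_{r_n})=\wt{f}(\bx_*;z_1,\dots,z_{r_n})$, let $U_n=\wt{y}_*^\infty$ be the complete U-statistic \eqref{y_inf}, and write the H\'ajek projection $\mathring{U}_n = \E U_n + \frac{r_n}{n}\sum_{i=1}^n\bigl(g_1(Z_i)-\E U_n\bigr)$ where $g_1(z)=\E[h_{r_n}(z,Z_2,\dots,Z_{r_n})]$, so that $\var(\mathring{U}_n)=\frac{r_n^2}{n}\xi_{1,r_n}(\bx_*)=v_n^2/n$. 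For part~i) I would prove three facts: (a) $\sqrt n(\mathring{U}_n-\E U_n)/v_n\xrightarrow{d}N(0,1)$ by the Lindeberg CLT for the triangular array of i.i.d.\ summands $\frac{r_n}{n}(g_1(Z_i)-\E U_n)$ — boundedness from \ref{C1}/\ref{C3} (Lasso predictions are uniformly bounded since $\|\bbeta\|_1<K_n$, $\|\bx_i\|_\infty<M$) gives the Lindeberg condition provided $\liminf\xi_{1,r_n}>0$; (b) $\E[(U_n-\mathring{U}_n)^2]/\var(\mathring{U}_n)\to 0$, i.e.\ the projection is asymptotically exact — this is the classical bound $\var(U_n)-\var(\mathring U_n)=O(r_n^2\xi_{2,r_n}/n^2 + \dots)=o(r_n^2\xi_{1,r_n}/n)$, which under \ref{C2} ($r_n\log p/n\to0$) and the Lemma~\ref{lem1} consistency (forcing the higher-order $\xi_{c,r_n}$ to stay controlled) holds; and (c) the incomplete-to-complete gap: $\sqrt n\,\E[(\wh y_*^B-U_n)^2]^{1/2}/v_n\to 0$. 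For (c) I would use that conditionally on $\cT_n$ the $B$ subsamples are i.i.d.\ uniform draws, so $\E[(\wh y_*^B-U_n)^2\mid\cT_n]=\frac1B\var(h_{r_n}\mid\cT_n)\le \frac{C}{B}$ by boundedness, hence $n\E(\wh y_*^B-U_n)^2\le Cn/B\to0$ exactly under the hypothesis $n/B\to 0$. Finally, also use $\sqrt n\,|\E U_n - f_0(\bx_*)|\to 0$, which follows from Lemma~\ref{lem1}: $|\E U_n-f_0(\bx_*)|\le \E|\wt y_*^{\mathring b}-f_0(\bx_*)|\le (\E[(\wt y_*^{\mathring b}-f_0)^2])^{1/2}=O((\log p/n)^{1/4})$ — wait, that is not $o(n^{-1/2})$, so here I would instead argue that the \emph{bias does not need to vanish faster than $v_n/\sqrt n$}: since $v_n^2=r_n^2\xi_{1,r_n}$ and one expects $\xi_{1,r_n}\asymp 1/r_n$ (the H\'ajek variance scales like the single-subsample prediction variance divided by effective sample size), $v_n/\sqrt n$ is of larger order than the $O((\log p/n)^{1/4})$ bias only if that heuristic holds; more carefully, I would fold the bias into the centering by stating the CLT for $\sqrt n(\wh y_*^B-\E U_n)/v_n$ and then separately invoke a mild rate condition (implicit in $\liminf\xi_{1,r_n}>0$ together with $r_n\to\infty$) ensuring $\sqrt n|\E U_n-f_0(\bx_*)|/v_n\to0$. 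Assembling (a)–(c) plus Slutsky gives part~i).

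For part~ii) I would show $n\wh\sigma_*^2/v_n^2\xrightarrow{p}1$ by recognizing $\wh\sigma_*^2$ as the infinitesimal jackknife / Efron–Stein estimator of $\var(\wh y_*^B)$, following \citet{wager2018estimation} and \citet{fei2021estimation}. The sampling covariances $\wh{\cov}_{i,*}=\frac1B\sum_j(J_{b_ji}-J_{\cdot i})(\wt y_*^{b_j}-\wh y_*^B)$ are, up to the finite-$B$ Monte Carlo error, consistent estimates of $\cov(\mathbf 1\{i\in\cI^b\},\,h_{r_n}(\cI^b))=\frac{r_n}{n}\bigl(g_1(Z_i)-\E U_n\bigr)$ (this is the first-order Hoeffding term). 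Squaring, summing over $i$, and applying the bias-correction factor $\frac{n-1}{n}(\frac{n}{n-r})^2$ in \eqref{formula:var} is designed exactly to target $\frac{r_n^2}{n}\cdot\frac1n\sum_i(g_1(Z_i)-\E U_n)^2 \to \frac{r_n^2}{n}\xi_{1,r_n}=v_n^2/n$, i.e.\ $n\wh\sigma_*^2\to v_n^2$. The steps: (1) decompose $\wh{\cov}_{i,*}$ into its conditional-on-$\cT_n$ mean plus Monte Carlo fluctuation, bounding the latter by $O_p(1/\sqrt B)$ uniformly using boundedness of $h_{r_n}$; (2) show the conditional mean equals $\frac{r_n}{n}(g_1(Z_i)-\E U_n)+R_{i}$ with $\sum_i R_i^2$ negligible relative to $v_n^2/(nr_n^2)$, using the ANOVA decomposition of $h_{r_n}$ and the same higher-order-kernel control from part~i); (3) apply the LLN to $\frac1n\sum_i(g_1(Z_i)-\E U_n)^2\xrightarrow{p}\xi_{1,r_n}$ — care needed because $g_1$ itself depends on $n$ through $r_n$, so this is a triangular-array WLLN requiring a uniform integrability bound, again supplied by boundedness; (4) check the finite-$B$ and normalization-constant errors vanish under $n/B\to0$.

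The main obstacle, I expect, is controlling the higher-order terms in the Hoeffding/ANOVA decomposition when the kernel rank $r_n$ \emph{grows with $n$} and the kernel itself (the Lasso map) changes with $n$ through both $r_n$ and $K_n$ — i.e.\ rigorously establishing that $\var(U_n)$ is asymptotically dominated by its first-order (H\'ajek) component, equivalently that $\sum_{c\ge2}\binom{r_n}{c}\binom{n-r_n}{r_n-c}\xi_{c,r_n}/\binom{n}{r_n}=o(r_n^2\xi_{1,r_n}/n)$. Unlike the random-forest setting of \citet{mentch2016quantifying} where the kernel is fixed, here each $\xi_{c,r_n}$ must be bounded using only the $L^2$ prediction-consistency of Lemma~\ref{lem1} and the uniform boundedness of predictions; the delicate point is ruling out pathological correlation structure in $h_{r_n}$ that would make a higher-order projection term non-negligible. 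I would handle this by a variance-of-variance argument: boundedness gives $0\le\xi_{c,r_n}\le\xi_{r_n,r_n}=\var(h_{r_n})\le C$, and then $\binom{r_n}{c}\binom{n-r_n}{r_n-c}/\binom{n}{r_n}\le (r_n^2/n)^c$ for the leading behavior, so the tail sum is $O(r_n^4/n^2)=o(r_n^2\xi_{1,r_n}/n)$ as long as $\xi_{1,r_n}\gg r_n^2/n$, which is implied by $\liminf\xi_{1,r_n}>0$ together with \ref{C2}'s $r_n^2\log p/n\to0$ (strengthening $r_n\log p/n\to 0$ as needed) — this is the one place where the assumed lower bound on $\xi_{1,r_n}$ is doing essential work, and making that chain of inequalities airtight is the crux of the proof.
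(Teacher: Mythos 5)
Your overall architecture matches the paper's: H\'ajek projection plus a Lindeberg--Feller CLT for the triangular array $\frac{r_n}{n}h_{1,r_n}(Z_i)$, control of the projection error via the Hoeffding/Efron--Stein ANOVA decomposition, a Slutsky step, and an infinitesimal-jackknife analysis of $\wh{\sigma}_*^2$ in the spirit of \citet{wager2018estimation} (main effect $A_i$ plus remainder $R_i$). You also correctly flag the bias-centering issue ($\E\wt{y}_*^{\mathring{b}}-f_0(\bx_*)$ is only $O((\log p/r_n)^{1/4})$), which the paper itself treats rather lightly. However, there are two genuine gaps. First, your Lindeberg and Monte-Carlo steps lean on the claim that the Lasso predictions are uniformly bounded by $K_nM$; this is false as stated, because the intercept $\wt{\beta}_0$ in (\ref{lasso1}) is unpenalized and the errors under \ref{C1} are only assumed to have finite variance, so $\wt{\beta}_0$ (hence $\wt{y}_*^{\mathring{b}}$ and $h_{1,r_n}$) inherits unbounded fluctuations from $\bar{y}$. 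The paper fills exactly this hole with a separate stability result (Lemma \ref{lem_lip}): using convexity of the constraint set and the RE condition in \ref{C1}, a change in one response changes the prediction by at most $c|y_r-y_r^*|$, and this Lipschitz-type bound, not boundedness, is what delivers the Lindeberg condition by transferring the tail of $h_{1,r_n}(Z_1)$ to the tail of $\varepsilon_1$. Your proposal contains no substitute for this lemma, and without it step (a) (and the uniform bound in (c)) does not go through.

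Second, your control of the higher-order Hoeffding terms is too crude for the theorem as stated: bounding $\xi_{c,r_n}$ by a constant and using $\binom{r_n}{c}\binom{n-r_n}{r_n-c}/\binom{n}{r_n}\lesssim (r_n^2/n)^c$ forces you to require $r_n^2/n\to 0$, which you acknowledge would mean strengthening \ref{C2}; but \ref{C2} only gives $r_n\log p/n\to 0$, and the regimes the paper actually uses (e.g.\ $r_n=n^{0.9}$) have $r_n^2/n\to\infty$. The paper avoids this by the classical variance comparison $\E\bigl[(T_{n,r,B}-\mathring{T}_{n,r,B})^2\bigr]\le (r_n/n)^2\,\var[\wt{T}]$, so that after dividing by $\var(\mathring{T}_{n,r,B})=r_n^2\xi_{1,r_n}/n$ the ratio is $\var[\wt{T}]/(n\,\xi_{1,r_n})\to 0$ with no constraint beyond $\liminf\xi_{1,r_n}>0$ and $\var[\wt{T}]$ bounded (which follows from Lemma \ref{lem1}). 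Replacing your term-by-term bound with this Efron--Stein inequality, and replacing the boundedness claims with the Lipschitz-type stability argument, would bring your proof in line with the paper's.
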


Theorem \ref{thm_normality} establishes the asymptotic normality of the U-learning prediction $\wh{y}_*^B$ and the consistency of the infinitesimal jackknife variance estimator $\wh{\sigma}_*^2$. {These results form the foundation for conducting statistical inference.
The key advantage of employing the infinitesimal jackknife approach lies in that we do not need to compute or estimate 
$v_n(\bx_*)$ or $\xi_{1,r_n}(\bx_*)$ analytically; instead, we can compute  its asymptotic equivalence $\wh{\sigma}_*^2$, which is embedded in the resampling scheme.
Also, when proving this theorem, 
we show and use the asymptotic normality of the H\'ajek projection of $\wh{y}_*^B$; 
see the \hyperref[appn]{Appendix}.}
The asymptotic normality ensures valid construction of confidence intervals for future predictions, as stated in the Corollary.
 \begin{corollary}\label{thm_var}
 Under the same setting as in Theorem \ref{thm_normality}, 
and with $\wh{\sigma}_*^2$  defined in (\ref{formula:var}), the following confidence interval is asymptotically valid with a target probability of $1-\alpha$  $(0 <\alpha <1)$. That is, as \(n\rightarrow \infty,\)
 \(
 \Pr\left[ f_0(\bx_*) \in \left( \wh{L}(\bx_*), \wh{U}(\bx_*) \right) \right] \rightarrow 1 - \alpha\), 
where $$
 \wh{L}(\bx_*) =\wh{y}^B_* - z_{1-\alpha/2}\wh{\sigma}_*, \, \quad
\wh{U}(\bx_*)=
     \wh{y}^B_* + z_{1-\alpha/2}\wh{\sigma}_*,$$
 and  $z_{1-\alpha/2}$ is the $(1-\alpha/2)$-th quantile of the standard normal distribution.
\end{corollary}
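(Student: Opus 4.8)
The plan is to derive the corollary as an immediate consequence of the two conclusions of Theorem~\ref{thm_normality} together with Slutsky's theorem. First I would write the studentized statistic as a ratio that exposes the two ingredients already in hand:
\[
\frac{\wh{y}^B_* - f_0(\bx_*)}{\wh{\sigma}_*}
= \frac{\sqrt{n}\bigl(\wh{y}^B_* - f_0(\bx_*)\bigr)/v_n(\bx_*)}{\sqrt{n}\,\wh{\sigma}_*/v_n(\bx_*)} .
\]
The numerator converges in distribution to $N(0,1)$ by part~(i). For the denominator, apply the continuous mapping theorem to part~(ii): since $x\mapsto\sqrt{x}$ is continuous on $[0,\infty)$ and $n\wh{\sigma}_*^2/v_n^2\ge 0$ with $v_n(\bx_*)=\sqrt{r_n^2\xi_{1,r_n}(\bx_*)}\ge 0$, we get $\sqrt{n}\,\wh{\sigma}_*/v_n(\bx_*)\xrightarrow{p}1$. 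Slutsky's theorem then yields
\[
\frac{\wh{y}^B_* - f_0(\bx_*)}{\wh{\sigma}_*} \xrightarrow{d} N(0,1).
\]

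Next I would convert this into the coverage statement. Because the standard normal CDF $\Phi$ is continuous, convergence in distribution gives convergence of the relevant probabilities, so
\[
\Pr\!\left[\, \left| \frac{\wh{y}^B_* - f_0(\bx_*)}{\wh{\sigma}_*} \right| < z_{1-\alpha/2} \,\right]
\longrightarrow \Phi(z_{1-\alpha/2}) - \Phi(-z_{1-\alpha/2}) = 1-\alpha .
\]
By the definitions $\wh{L}(\bx_*) = \wh{y}^B_* - z_{1-\alpha/2}\wh{\sigma}_*$ and $\wh{U}(\bx_*) = \wh{y}^B_* + z_{1-\alpha/2}\wh{\sigma}_*$ in the statement, the event inside the probability is exactly $\{f_0(\bx_*)\in(\wh{L}(\bx_*),\wh{U}(\bx_*))\}$, which gives the claimed asymptotic validity.

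The argument is essentially routine, so there is no real obstacle beyond two bookkeeping points. First, one must confirm that $\wh{\sigma}_*>0$ with probability tending to one so that the division is well defined: this follows from part~(ii), since $n\wh{\sigma}_*^2/v_n^2\xrightarrow{p}1$ forces $\wh{\sigma}_*^2$ to be positive on events whose probability approaches one, and the vanishing complement does not affect any limit. Second, note that we never need $v_n(\bx_*)$ itself to converge or to be bounded away from $0$ or $\infty$; it cancels in the displayed ratio, and Slutsky only requires the denominator to converge in probability to the constant $1$. All of the genuine content — the asymptotic normality and the consistency of the infinitesimal jackknife variance — is already supplied by Theorem~\ref{thm_normality}.
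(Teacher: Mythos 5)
Your proof is correct and follows essentially the same route as the paper: it combines the asymptotic normality from Theorem~\ref{thm_normality}(i) with the variance consistency in part~(ii) via Slutsky's theorem to obtain the studentized limit $N(0,1)$, and then reads off the coverage probability. The extra bookkeeping remarks (positivity of $\wh{\sigma}_*$ with probability tending to one and the cancellation of $v_n$) are fine but not needed beyond what the paper's argument already implies.
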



	

\subsection{DNN U-learning}

Since the Universal Approximation Theorem \citep{hornik1989multilayer}, much work has been accomplished on the prediction accuracy and rate of convergence of neural networks; see \cite{mccaffrey1994convergence,bolcskei2019optimal,schmidt2020nonparametric}, among many others. The performance of a DNN  is typically measured by the bound on the prediction error, 
\begin{equation*}
    R(\wh{f}_n, f_0) = \E_{f_0} \left[ \left( \wh{f}_n(\bx_*) - f_0(\bx_*) \right)^2 \right],
\end{equation*}
where $\wh{f}_n$ is the DNN estimator based on the training set $\cT_n$, $\E_{f_0}$ indicates that the expectation is taken under  the true function $f_0$ under model (\ref{m1}),
and $\bx_*$ is a fixed testing point. While the rate of convergence would vary depending on different model assumptions and DNN architectures, various previous works have provided results on the prediction consistency. For example, \cite{mccaffrey1994convergence} on shallow neural networks (one hidden layer) has derived a convergence rate of $n^{-2\gamma/(2\gamma+p+5)}$, where $\gamma$ is the smoothness parameter and $p$ is the number of covariates.
Additional insights for multilayer neural networks can be found in \cite{kohler2005adaptive}, which achieves the nonparametric rate $n^{-2\gamma/(2\gamma+p)}$ for $\gamma$-smooth functions $f_0$ and $\gamma\le 1$ with two-layer NNs using the sigmoid activation function. Extensions of this work include \cite{bauer2019deep,kohler2021rate}. 

In order to derive the convergence rate of $\wh{f}_n\in \cF(L,\bp)$ [defined underneath (\ref{nnclass})],  we first introduce
$$
    \Delta_n(\wh{f}_n, f_0) = \E_{f_0} \left[ \frac{1}{n}\sum_{i=1}^n \left( y_i - \wh{f}_n(\bx_i)  \right)^2 - \inf_{f\in \cF(L,\bp) } \frac{1}{n}\sum_{i=1}^n \left( y_i - f(\bx_i)  \right)^2  \right].
$$
The sequence $\Delta_n(\wh{f}_n, f_0)$ measures the difference between the expected empirical risk of $\wh{f}_n$ and the global minimum over all networks in the class. Therefore, $\Delta_n(\wh{f}_n, f_0) = 0$ if $\wh{f}_n$ is an empirical risk minimizer.
For the theoretical derivations, we refer to the framework established by \cite{schmidt2020nonparametric}, which assumes $f_0 = g_q \circ g_{q-1}\circ \ldots \circ g_1 \circ g_0$ with $g_i: [a_i, b_i]^{d_i} \rightarrow [a_{i+1}, b_{i+1}]^{d_{i+1}}$.  Denote by \( g_i = (g_{ij})^T_{j=1,\ldots,d_{i+1}} \) the components of \( g_i \), and let \( t_i \) be the maximal number of variables on which each of the \( g_{ij} \) depends on. Thus, each \( g_{ij} \) is a \( t_i \)-variate function. We assume that each of the functions \( g_{ij} \) has H\"older smoothness \( \gamma_i \), denoted by \( g_{ij} \in C^{\gamma_i}_{t_i}([a_i, b_i]^{d_i}) \). Since \( g_{ij} \) is also \( t_i \)-variate, \( g_{ij} \in C^{\gamma_i}_{t_i}([a_i, b_i]^{t_i}) \), and the underlying function space is
\begin{gather*}
\cG(q, \bd, \bt, \bgamma, K) := \left\{ f = g_q \circ \cdots \circ g_0 : g_i = (g_{ij})_j : [a_i, b_i]^{d_i} \to [a_{i+1}, b_{i+1}]^{d_{i+1}}, \right. \\
\left. g_{ij} \in C^{\gamma_i}_{t_i}([a_i, b_i]^{t_i}) \text{ with } |a_i|, |b_i| \leq K \right\},
\end{gather*}
with \( \bd := (d_0, \ldots, d_{q+1}) \), \( \bt := (t_0, \ldots, t_q) \), \( \bgamma := (\gamma_0, \ldots, \gamma_q) \).
Further define the effective smoothness indices as 
\( \gamma^*_i := \gamma_i \prod_{\ell=i+1}^q (\gamma_\ell \wedge 1) \),  
and
$
\phi_n := \max_{i=0,\ldots,q} n^{-\frac{2\gamma^*_i}{2\gamma^*_i+t_i}}.
$
Lastly, 
to facilitate our ensuing theoretical development, we focus on a class of $s$-sparse and $F$-bounded networks, defined as
\begin{equation*}
    \cF(L, \bp, s, F) := \left\{f\in \cF(L, \bp): \sum_{j=0}^L \|\bW_j\|_0 \le s, \| f\|_\infty \le F  \right\}.    
\end{equation*}

We require the following conditions.
\begin{enumerate}[label=(D\arabic*)]
    \item \label{D1} The input features $\bx$ are bounded between $[0,1]$. 
    \item \label{D2} The subsampling size $r_n$ satisfies 
 $       \lim_{n\rightarrow \infty} r_n = \infty$ and $        \lim_{n\rightarrow \infty} r_n \log(p) /n = 0.$
    \item \label{D3} $\wh{f}_{n} \in \cF(L, \bp, s, F)$ with $L$, $\bp$ and $s$ satisfying 
    \begin{itemize}
        \item[(i)] $\sum_{i=0}^q \log_2(4t_i \vee 4\gamma_i) \log_2 {n} \le L \lesssim \log^\alpha {n}$ where $\alpha>1$,
        \item[(ii)] ${n} \phi_{n} \lesssim \min_{i=1,\ldots,L} p_i$,
        \item[(iii)] $s \asymp {n}\phi_{n} \log {n}$,
    \end{itemize}
        where $a_n \lesssim b_n$ means there exists a  constant $C>0$ such that $0<a_n \le Cb_n$ when $n$ is  sufficiently large, and $a_n \asymp b_n$ means both $a_n \lesssim b_n$ and  $b_n \lesssim a_n$ hold. 
    \item \label{D4} There exists a constant $C$ such that the expected empirical risk difference $\Delta_n(\wh{f}_n, f_0) \le C\phi_n L\log^2 n$.
\end{enumerate}
Conditions \ref{D1} and \ref{D2} are analogous to \ref{C1} and \ref{C2}. Condition \ref{D3} specifies the requirements on the network parameters relative to the sample size and smoothness parameters $\phi_n, t_i, \gamma_i$, including the number of layers $L$,  the boundness parameter $F$, and the sparsity $s$. Note that $s$ is allowed to increase with $n$, and in practice we can apply regularization and dropout layers in applications to control the network sparsity. The boundness of network is a technical condition, ensuring the convergence  of  DNN U-learning predictions.  
Condition \ref{D4} assumes that we could obtain a fit $\wh{f}_n$ that is ``close enough'' to the global minimum within the class of neural networks. 
Combining \ref{D3} and \ref{D4} leads to the consistency requirement on the neural network, as we re-write Theorem 1 of \cite{schmidt2020nonparametric} as a lemma below, giving the convergence rate of $\wh{f}_n$  essential for deriving the prediction inference properties.
\begin{lemma}\label{lem_nn}
Consider Model (\ref{m1}) and $f_0$ in the class $\cG(q, \bd, \bt, \bgamma, K)$. Let $\wh{f}_n$ be an estimator in the neural network class $\cF(L, \bp, s, F)$. 
If conditions \ref{D3} and \ref{D4} are satisfied, there exist constants $0<c^* < 1/2$ and $C'$ such that $\phi_n = o(n^{-c^*})$, 
and
    $$
    R(\wh{f}_n, f_0 ) \le  C'\phi_n L\log^2 n.
    $$
\end{lemma}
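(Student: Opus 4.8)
The plan is to obtain the bound as a direct consequence of the oracle inequality of \cite{schmidt2020nonparametric} (their Theorem~1), since Lemma~\ref{lem_nn} is essentially that result transcribed into our notation, and then to read off the rate $\phi_n = o(n^{-c^*})$ by an elementary computation. First I would match hypotheses: Condition~\ref{D1} is the $[0,1]^p$ design assumption; Condition~\ref{D3}(i)--(iii) reproduces the constraints on the depth $L$, the layer widths $p_i$, and the sparsity $s$ of the class $\cF(L,\bp,s,F)$ stated in terms of $\phi_n$, $t_i$, $\gamma_i$; and Condition~\ref{D4} is precisely the admissible bound on the optimization gap $\Delta_n(\wh f_n, f_0)$. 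With these in hand, the cited oracle inequality gives, with a constant depending only on $(q,\bd,\bt,\bgamma,F)$,
\[
R(\wh f_n, f_0) \;\lesssim\; \phi_n L \log^2 n \;+\; \Delta_n(\wh f_n, f_0),
\]
and invoking \ref{D4} to absorb the second term into the first yields $R(\wh f_n, f_0) \le C' \phi_n L \log^2 n$.

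For completeness I would recall the two mechanisms behind that inequality. \emph{Approximation:} any $f_0 \in \cG(q,\bd,\bt,\bgamma,K)$ can be approximated in sup-norm to within a constant multiple of $\phi_n$ by some $\tilde f \in \cF(L,\bp,s,F)$ whenever \ref{D3} holds; one emulates each H\"older component $g_{ij}\in C^{\gamma_i}_{t_i}$ by a ReLU subnetwork implementing a localized Taylor polynomial, wires in ReLU ``multiplication'' and ``identity'' gadgets, and composes the blocks along $f_0 = g_q\circ\cdots\circ g_0$. Because a $\gamma_\ell$-H\"older outer map turns an inner error $\delta$ into an error of order $\delta^{\gamma_\ell\wedge 1}$, the errors compound into the \emph{effective} exponents $\gamma^*_i = \gamma_i\prod_{\ell=i+1}^q(\gamma_\ell\wedge 1)$, which is exactly why $\phi_n = \max_i n^{-2\gamma^*_i/(2\gamma^*_i+t_i)}$ appears. \emph{Complexity:} the logarithm of the covering number of $\cF(L,\bp,s,F)$ is of order $sL\log n$ up to constants depending on $F$; a basic inequality for the (near-)least-squares estimator, combined with this entropy bound, the approximation bound, and the choice $s\asymp n\phi_n\log n$ from \ref{D3}(iii), produces an estimation error of the same order $\phi_n L\log^2 n$, matching the approximation error up to log factors.

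Finally I would extract the decay rate. Set $\rho_i := \gamma^*_i/(2\gamma^*_i+t_i)$; since $\gamma^*_i>0$ and $1\le t_i<\infty$, each $\rho_i\in(0,1/2)$, so with $\rho_{\min}:=\min_{0\le i\le q}\rho_i\in(0,1/2)$ we get $\phi_n = n^{-2\rho_{\min}}$. Taking $c^* := \min\{1/4,\ \rho_{\min}\}$ gives $0<c^*<1/2$ and $c^*<2\rho_{\min}$, hence $\phi_n\, n^{c^*} = n^{\,c^*-2\rho_{\min}}\to 0$, i.e.\ $\phi_n = o(n^{-c^*})$, which together with the displayed bound completes the argument.

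The only genuinely hard part is the constructive approximation step above: producing a \emph{sparse} ReLU network of the prescribed depth, width, and sparsity that emulates $g_q\circ\cdots\circ g_0$ with error $O(\phi_n)$, with the careful bookkeeping of how component errors propagate through the composition. I would not reprove this here but invoke \cite{schmidt2020nonparametric} directly; everything else --- verifying that \ref{D1}, \ref{D3}, \ref{D4} are the right hypotheses, absorbing $\Delta_n$, and the elementary computation fixing $c^*$ --- is routine.
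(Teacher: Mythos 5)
Your proposal is correct and follows essentially the same route as the paper: Lemma \ref{lem_nn} is stated there as a restatement of Theorem 1 of \cite{schmidt2020nonparametric} under \ref{D1}, \ref{D3} and \ref{D4}, with no independent proof given, exactly as you invoke it (matching hypotheses, absorbing $\Delta_n$ via \ref{D4}). Your explicit derivation of $\phi_n = o(n^{-c^*})$ via $\rho_{\min}=\min_i \gamma^*_i/(2\gamma^*_i+t_i)\in(0,1/2)$ is a correct, elementary supplement to a claim the paper asserts without detail.
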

Lemma \ref{lem_nn} establishes the bound of  prediction errors  within the class $\cF(L,\bp, s,F)$, 
where $c^*$  is determined by the smoothness of  $f_0$. 
 In contrast to Theorem \ref{thm_normality} for Lasso U-learning, deriving the asymptotics for DNN U-learning predictors introduces a key distinction: {while Lasso solutions are uniquely determined by data and the regularization parameter $K_n$, fitting NNs 
 introduces additional randomness through gradient-based algorithms.} Consequently, adjustments to the proof are necessary, akin to the modifications outlined in Theorem 2 of \cite{mentch2016quantifying}. {Let  $\omega$ denote the randomness involved in fitting a DNN, such as shuffling in batches, or the randomness in stochastic gradient descent (SGD) or other optimization algorithms, which leads to a nuanced formulation:}
 \begin{gather*}
     \wt{f}^{b_j}(\bx_*) = \wt{f}^{(\omega_j)}\left( \bx_*|L, \bp, \cT^{b_j} \right), \\
     \wh{y}^B_* = \frac{1}{B}\sum_{j=1}^B \wt{f}^{b_j}(\bx_*) = \frac{1}{B}\sum_{j=1}^B \wt{f}^{(\omega_j)}\left( \bx_*|L, \bp,\cT^{b_j} \right),
 \end{gather*}
where $\omega_1, \ldots, \omega_B \sim_{i.i.d.} F_\omega$, and are independent of the observations $\cT_n$.  As specified in Algorithm \ref{alg_nn}, we apply a fixed architecture $(L, \bp)$ to all subsamples. The resulting ensemble prediction is  asymptotically normal after
applying Lemma \ref{lem_nn}.  
\begin{theorem}\label{thm_NN}
Suppose that the training data $\cT_n = \{Z_i = (y_i, \bx_i): 1\le i \le n\}$ are generated from model (\ref{m1}) with an unknown $f_0$, and   $\bx_*$ is a  fixed testing point that lies in the support of  $\cX$. {Assume the neural network fit on subsamples in Algorithm \ref{alg_nn} belongs to  $\cF(L, \bp, s, F)$ and satisfies  \ref{D4} for the subsample size $r_n$.} In addition, \ref{D1} -- \ref{D3} hold, as well as $\liminf \xi_{1,r_n}(\bx_*) > 0$ and $\forall j$
 \begin{equation} \label{add-cond}
\lim_{n\rightarrow \infty} \E  \left( \wt{f}^{(\omega_j)}\left( \bx_*|\cT^{b_j} \right) - \E_\omega \wt{f}^{(\omega_j)}\left( \bx_*|\cT^{b_j} \right) \right)^2 < \infty,
\end{equation}
{where $\E_\omega$ is the expectation taken over $F_\omega$.}
Then the U-learning prediction by Algorithm \ref{alg_nn} satisfies, as $n \rightarrow \infty, n/B \rightarrow 0$,
	\begin{equation*}
	    \frac{\sqrt{n}\left(\wh{y}_*^B - f_0(\bx_*) \right) }{ v_{n} (\bx_*) } \xrightarrow{d} N(0,1)
	\end{equation*}
	for $v_n = \sqrt{r_n^2 \xi_{1,r_n}(\bx_*)}$. 
\end{theorem}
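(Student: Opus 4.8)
The plan is to mimic the proof of Theorem~\ref{thm_normality} for Lasso U-learning, with the modifications made necessary by the extra randomness $\omega$ in the DNN fitting, along the lines of Theorem~2 of \cite{mentch2016quantifying}. First I would observe that, conditionally on $\omega_1,\ldots,\omega_B$, the quantity $\wt{f}^{(\omega_j)}(\bx_*\,|\,L,\bp,\cT^{b_j})$ is a permutation-symmetric function of the $r_n$ data points indexed by $\cI^{b_j}$, so that the ensemble $\wh{y}^B_*$ is an \emph{incomplete, randomized} generalized U-statistic with kernel $h_{r_n}(\cdot) = \E_\omega \wt{f}^{(\omega)}(\bx_*\,|\,L,\bp,\cdot)$ plus a mean-zero ``fitting-noise'' component. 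Accordingly I would decompose
\[
\wh{y}^B_* - f_0(\bx_*)
= \underbrace{\big(\wh{y}^B_* - U_n\big)}_{\text{incompleteness + } \omega\text{-noise}}
+ \underbrace{\big(U_n - \E h_{r_n}(Z_1,\ldots,Z_{r_n})\big)}_{\text{complete U-statistic, Hájek projection}}
+ \underbrace{\big(\E h_{r_n}(\cdots) - f_0(\bx_*)\big)}_{\text{bias}},
\]
where $U_n$ is the complete U-statistic built from the deterministic kernel $h_{r_n}$. The three terms are handled in turn.

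For the bias term I would invoke Lemma~\ref{lem_nn}: under \ref{D3}--\ref{D4}, $R(\wt f^{b}, f_0) \le C'\phi_{r_n} L \log^2 r_n$ on a subsample of size $r_n$, and $\phi_{r_n} = o(r_n^{-c^*})$ with $c^*<1/2$, so $\E h_{r_n} - f_0(\bx_*) = o(r_n^{-c^*/2})$ (via Jensen), which combined with \ref{D2} (hence $r_n\to\infty$) and the growth rate relating $r_n$ to $n$ is $o(v_n/\sqrt n\,)$ — i.e. the bias is asymptotically negligible on the $\sqrt n / v_n$ scale, exactly as in the Lasso case where Lemma~\ref{lem1} played this role. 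For the complete-U-statistic term I would use the Hájek projection: writing $\mathring U_n = \frac{r_n}{n}\sum_{i=1}^n \big(h_1(Z_i) - \E h_1\big)$ with $h_1(z) = \E[h_{r_n}(z,Z_2,\ldots,Z_{r_n})]$, the projection is a sum of i.i.d.\ terms and is asymptotically $N(0, r_n^2 \xi_{1,r_n}(\bx_*)/n)$ by the Lindeberg CLT, using $\liminf \xi_{1,r_n}(\bx_*)>0$ to keep the variance from degenerating; and $\E(U_n - \mathring U_n)^2 = O(r_n^2/n^2)\cdot\mathrm{Var}(h_{r_n})$, which is $o(v_n^2/n)$ because $\mathrm{Var}(h_{r_n}) \le r_n \xi_{1,r_n}(\bx_*) + o(\cdot)$ under the standard Hoeffding variance bound (here condition \eqref{add-cond} guarantees the $\omega$-noise contributes only a bounded, hence lower-order, amount per replicate, and the finiteness of the full second moment follows from $\|f\|_\infty \le F$). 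For the incompleteness term, by the conditioning argument and the i.i.d.\ structure of the $(b_j,\omega_j)$, $\E\big[(\wh{y}^B_* - U_n)^2 \mid \cT_n\big] = O(1/B)\cdot(\text{bounded})$, which is $o(v_n^2/n)$ precisely under the assumption $n/B \to 0$ together with $\liminf\xi_{1,r_n}>0$; this is where \eqref{add-cond} is used a second time, to bound the within-subsample $\omega$-variance uniformly.

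Assembling the three pieces and applying Slutsky gives $\sqrt n(\wh y^B_* - f_0(\bx_*))/v_n \xrightarrow{d} N(0,1)$. The main obstacle I anticipate is controlling the $\omega$-noise term cleanly: unlike the Lasso fit, $\wt f^{b_j}(\bx_*)$ is not a deterministic function of the subsample, so one must verify that the Hájek-projection machinery — which is built for symmetric \emph{deterministic} kernels — still applies after taking $\E_\omega$ to define $h_{r_n}$, and that the residual $\wt f^{(\omega_j)} - \E_\omega \wt f^{(\omega_j)}$ is (i) mean-zero, (ii) has variance controlled by \eqref{add-cond}, and (iii) is independent across $j$ and of $\cT_n$ so that its aggregate contribution is $O_p(1/\sqrt{B})$ in the appropriate norm. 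A secondary subtlety is matching the rates: one must check that \ref{D2} together with the requirement $\phi_{r_n} = o(r_n^{-c^*})$ from Lemma~\ref{lem_nn} does imply the bias is $o(v_n/\sqrt n)$ for every admissible scaling of $r_n$ relative to $n$, which uses $v_n^2 = r_n^2\xi_{1,r_n} \gtrsim r_n^2$ and hence $v_n/\sqrt n \gtrsim r_n/\sqrt n$; since $r_n^{-c^*} \cdot \sqrt n / r_n \to \infty$ is not automatic, one relies on the fact that the bias scales like $r_n^{-c^*/2}$ (a polynomial-in-$r_n$ decay) while $v_n/\sqrt n$ decays only like $r_n/\sqrt n \le 1/\sqrt{(\log p)^{-1}}\cdots$ — here I would be careful and, if needed, impose the comparison through the explicit form of $\phi_{r_n}$ and Lemma~\ref{lem_nn}, exactly as the Lasso proof used Lemma~\ref{lem1}'s $O(\sqrt{\log p / n})$ rate against $v_n/\sqrt n$.
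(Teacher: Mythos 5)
Your proposal is correct and follows essentially the same route as the paper: both arguments average over the fitting randomness $\omega$ to obtain a deterministic (generalized U-statistic) kernel, invoke Lemma \ref{lem_nn} for the consistency/bias control in place of Lemma \ref{lem1}, run the H\'ajek-projection CLT machinery of Theorem \ref{thm_normality} on that kernel, and then kill the residual $\omega$-noise via its independence of $\cT_n$, condition (\ref{add-cond}), $n/B\to 0$, Chebyshev, and Slutsky. The only difference is organizational: the paper applies Theorem \ref{thm_normality} as a black box to $\wh{y}^{*B}_*=\E_\omega \wh{y}^B_*$ and controls $\wh{y}^B_*-\wh{y}^{*B}_*$ directly, whereas you unpack the same U-statistic argument into an explicit bias/H\'ajek/incompleteness decomposition (and your aside bounding $\var(h_{r_n})$ by $r_n\xi_{1,r_n}$ is unneeded — the uniform bound $\|f\|_\infty\le F$ together with $\liminf\xi_{1,r_n}>0$, which you also cite, is what both proofs actually use).
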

{\em Remark:} The additional Condition (\ref{add-cond}), 
an analog to that in  \cite{mentch2016quantifying},  prevents the randomness, when fitting a DNN on a subsample dataset, from causing  much difference among predictions from the same subsample  as $n \rightarrow \infty$,  
and eventually controls the difference between $\wh{y}_*^B$ and its non-random counterpart.

Because the analytic form of $\xi_{1,r_n}$ is unavailable, we  estimate the variances  via infinitesimal jackknife. Recall $\wh{\sigma}_*^2$
as defined in (\ref{formula:var}). The following corollary explicitly writes out the confidence interval for predictions with NNs.  The proof is omitted due to its similarity with the proof of Corollary \ref{thm_var}.
\begin{corollary}\label{thm_NN_inf}
 Under the same setting as in Theorem \ref{thm_NN}, the following confidence interval is asymptotically valid with a target probability of $1-\alpha$  $(0 <\alpha <1)$. That is, as \(n\rightarrow \infty,\)
 \(
 \Pr\left[ f_0(\bx_*) \in \left( \wh{L}(\bx_*), \wh{U}(\bx_*) \right) \right] \rightarrow 1 - \alpha\), 
where $$
 \wh{L}(\bx_*) =\wh{y}^B_* - z_{1-\alpha/2}\wh{\sigma}_*, \, \quad
\wh{U}(\bx_*)=
     \wh{y}^B_* + z_{1-\alpha/2}\wh{\sigma}_*,$$
 and  $z_{1-\alpha/2}$ is the $(1-\alpha/2)$-th quantile of the standard normal distribution.
\end{corollary}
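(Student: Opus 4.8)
The plan is to mirror the proof of Corollary \ref{thm_var}, since Theorem \ref{thm_NN} provides exactly the asymptotic normality statement needed, with the DNN randomness $\omega$ already absorbed into the limit. First I would fix the target testing point $\bx_*$ and record what Theorem \ref{thm_NN} gives us: under \ref{D1}--\ref{D3}, \ref{D4}, $\liminf \xi_{1,r_n}(\bx_*) > 0$, condition (\ref{add-cond}), and $n/B \to 0$, the standardized quantity $\sqrt{n}(\wh{y}_*^B - f_0(\bx_*))/v_n(\bx_*)$ converges in distribution to $N(0,1)$, where $v_n(\bx_*) = \sqrt{r_n^2 \xi_{1,r_n}(\bx_*)}$. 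So the only additional ingredient required is a Slutsky-type argument that lets us replace the unknown deterministic scaling $v_n(\bx_*)/\sqrt{n}$ by the computable infinitesimal jackknife estimator $\wh{\sigma}_*$.

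The key step, therefore, is to establish that $n\wh{\sigma}_*^2/v_n^2 \xrightarrow{p} 1$ in the DNN setting, the exact analog of part (ii) of Theorem \ref{thm_normality}. The excerpt states that the proof of Corollary \ref{thm_NN_inf} is omitted "due to its similarity with the proof of Corollary \ref{thm_var}," which signals that this consistency of $\wh{\sigma}_*^2$ is either proven alongside Theorem \ref{thm_NN} in the appendix or follows by the same infinitesimal-jackknife bookkeeping used for Lasso, now applied to the kernel $\wt{f}^{(\omega_j)}(\bx_* \mid L,\bp,\cT^{b_j})$. The argument hinges on: (a) the incomplete generalized U-statistic structure of $\wh{y}_*^B$ with the H\'ajek projection, which is unchanged because the $\omega_j$ are i.i.d. and independent of $\cT_n$; (b) the finite-second-moment control from (\ref{add-cond}) together with the prediction-error bound of Lemma \ref{lem_nn}, which plays the role that Lemma \ref{lem1} played for Lasso in controlling the tail/variance contributions; and (c) the sampling-fraction conditions on $r_n$ in \ref{D2}. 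Granting this, write $T_n = \sqrt{n}(\wh{y}_*^B - f_0(\bx_*))/v_n$; then $T_n \xrightarrow{d} N(0,1)$ and $v_n^2/(n\wh{\sigma}_*^2) \xrightarrow{p} 1$, so by Slutsky's theorem
\[
\frac{\wh{y}_*^B - f_0(\bx_*)}{\wh{\sigma}_*} = T_n \cdot \frac{v_n}{\sqrt{n}\,\wh{\sigma}_*} \xrightarrow{d} N(0,1).
\]

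From here the coverage statement is immediate: with $\wh{L}(\bx_*) = \wh{y}^B_* - z_{1-\alpha/2}\wh{\sigma}_*$ and $\wh{U}(\bx_*) = \wh{y}^B_* + z_{1-\alpha/2}\wh{\sigma}_*$,
\[
\Pr\!\left[ f_0(\bx_*) \in \left( \wh{L}(\bx_*), \wh{U}(\bx_*) \right) \right] = \Pr\!\left[ \left| \frac{\wh{y}_*^B - f_0(\bx_*)}{\wh{\sigma}_*} \right| \le z_{1-\alpha/2} \right] \to \Phi(z_{1-\alpha/2}) - \Phi(-z_{1-\alpha/2}) = 1-\alpha,
\]
using the continuity of the standard normal CDF at $\pm z_{1-\alpha/2}$. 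The conditional nature of the coverage (conditioning on $\bx_*$) is automatic since $\bx_*$ was held fixed throughout.

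The main obstacle I anticipate is not the Slutsky wrap-up but verifying the variance-estimator consistency $n\wh{\sigma}_*^2/v_n^2 \xrightarrow{p} 1$ in the presence of the extra randomness $\omega$: one must check that the infinitesimal-jackknife covariance terms $\wh{\cov}_{i,*}$ still concentrate around the first-order H\'ajek terms when the kernel itself is random, which is precisely where condition (\ref{add-cond}) and the uniform boundedness built into $\cF(L,\bp,s,F)$ (the parameter $F$) do the work — they guarantee the $\omega$-induced fluctuations are $o_p$ relative to $v_n^2/n$. Since this mirrors the modification in Theorem 2 of \cite{mentch2016quantifying} and is the content already carried out in proving Theorem \ref{thm_NN}, invoking it here is legitimate, and the corollary follows as stated.
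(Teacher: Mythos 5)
Your proposal is correct and follows essentially the same route the paper intends: the paper omits this proof precisely because it repeats the argument for Corollary \ref{thm_var}, namely the asymptotic normality from Theorem \ref{thm_NN} combined with the infinitesimal-jackknife variance consistency (the analog of Step 3 of Theorem \ref{thm_normality}, carried over to the random DNN kernel) and Slutsky's theorem, followed by the standard normal-quantile coverage computation. You even correctly flag the one point the paper glosses over — that the consistency $n\wh{\sigma}_*^2/v_n^2 \xrightarrow{p} 1$ must be re-verified with the $\omega$-randomness, which is handled exactly as you describe via condition (\ref{add-cond}) and the boundedness $F$.
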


\section{Numerical Experiments}\label{sec4}

We conduct numerical experiments to assess the finite-sample performance of the proposed U-learning methods and compare them with an Oracle estimator (assuming the true active set known) and the conformal prediction interval \citep{chernozhukov2021distributional}. We  consider three examples: i) high dimensional linear models with the Lasso; ii) non-linear models with neural networks; iii) WHO Life Expectancy Data, a public dataset for training and evaluating neural networks. In each example, we randomly split the data into the training and testing sets,
fit the model on the training set, and  assess the prediction performance on the testing set via the average bias and mean absolute error (MAE) between the truth and the predicted values, i.e., for $\ell$ testing samples,
\( MAE = \frac{1}{\ell}\sum_{\ell = 1}^{\ell} \left| f_0(\bx_\ell) - \wh{f}^{B}(\bx_\ell) \right|. \)
We further report the average  interval length (AIL), i.e.,
$\frac{1}{\ell} \sum_{\ell=1}^{\ell}(U_\ell - L_\ell),$
and the  coverage probability (CP), i.e.,
 $\frac{1}{\ell} \sum_{\ell=1}^{\ell} I( f_0(\bx_\ell) \in [\wh{L}_\ell, \wh{U}_\ell])$. 
 In the third example, we replace $f_0(\bx_\ell)$ by the corresponding observed value as an approximation without  knowing  the ``truth."

	
	
{\bf Example 1 (Lasso U-learning with high dimensional linear truth).} To mimic the real-world DNA methylation data, we let $p=3000$, with $s_0 = 25$ non-zero signals taking values between $-1$ and $1.5$. 	We ran two scenarios with training sizes $n=500$ and $n=1000$, respectively, and one fixed testing set with sample size $\ell=200$.  Two hundred training sets were simulated following the linear model (\ref{m_linear}). 
To implement the proposed algorithm, we chose $K$ by 5-fold cross validation on the entire training set, used $B=500$ and subsampling with $r = n^\gamma$ for $\gamma = 0.8, 0.9$ and $0.95$. The oracle estimates of prediction and CIP were derived assuming the true active set was known, and the model  was fit in each subsample in Algorithm \ref{alg_pred} instead of the Lasso.
For comparison, we implemented the approach of sampling with replacement (SWR, i.e., we replaced subsamples with bootstrap samples of size $r=n$ in Algorithm \ref{alg_pred}), the naive bootstrap approach (prediction using the entire training set, while using $B$ bootstrap resamples to derive the prediction SEs and confidence intervals based on normal approximation),  
and the conformal prediction \citep{lei2018distribution}. 

{Table \ref{tab_sim1} summarizes  the results, highlighting several key observations. First, as expected, the oracle method presents the narrowest confidence intervals. Second, concerning the proposed U-learning approach, the average bias, MAE, and Standard Error (SE) in prediction exhibit variability with respect to the subsample size $r$, achieving optimal results at $r=n^{0.9}$ and $r=n^{0.95}$. Third, the performance of U-learning improves notably as the sample size $n$ increases from 500 to 1000, with computed SE values approaching empirical standard deviations and coverage probabilities of confidence intervals aligning more closely with the $0.95$ nominal level. Particularly, at $n=1000$ and with $r=n^{0.9}$ or $r=n^{0.95}$, its AIL (0.436 or 0.441) is comparable to that of the Oracle (0.360). Additionally, Figure \ref{fig_sim1} depicts the average coverage probability on the test samples (panels a, c) and the average prediction SEs (panels b, d) in both scenarios. The empirical coverage probability is close to the nominal coverage probability. Our approach can also effectively capture individual variation in prediction, as evidenced by the varying SE values.
Fourth, it is noteworthy that the SEs of the two bootstrap methods (SWR and naive) fail to match the empirical standard deviations and nominal coverage level even as the sample size increases. This is because the SWR variant does not fit the generalized U-statistic framework and the Naive Bootstrap does not yield asymptotically normal predictions.}
Finally,  for a fair comparison, we slightly modified  conformal prediction by using the true $f_0(\bx_*)$ instead of $y_*$ when deriving the conformity score. Nevertheless, we find that,  while conformal prediction intervals (PIs) yield comparable coverage probabilities, it is less efficient with much larger AIL.  
In terms of computation, the U-learning computation time is proportional to the training sample size $n$ and $B$. Empirically on an 8-core CPU machine with parallel computing, the average run time per training set is 88 seconds for $n=500$ and 272 seconds for $n=1000$.

\begin{table}[ht]
\centering
\caption{Simulation example 1: comparisons of prediction and inference for Lasso U-learning with various resample sizes, oracle prediction, and conformal prediction. $^*$Note: We used true $f_0(\bx_*)$ instead of $y_*$ to derive conformal PIs for fair comparisons; SE is not available for  conformal predictions.
} \label{tab_sim1}
\begin{tabular}{>{\rowmac}r>{\rowmac}r>{\rowmac}r>{\rowmac}r>{\rowmac}r>{\rowmac}r>{\rowmac}r<{\clearrow}}
  \hline
 & Bias & MAE & EmpSD & SE & CP & AIL \\ 
  \hline
  \multicolumn{7}{c}{$n=500, p=3000$} \\
  Oracle & 0.003 & 0.009 & 0.088 & 0.093 & 0.956 & 0.366 \\ 
$r=n^{0.8} = 144$ & 0.033 & 0.659 & 0.166 & 0.184 & 0.936 & 0.723 \\ 
\setrow{\bfseries}$r=n^{0.9} = 268$ & 0.013 & 0.328 & 0.119 & 0.137 & 0.952 & 0.538 \\ 
\setrow{\bfseries}$r=n^{0.95} = 366$ & 0.010 & 0.266 & 0.112 & 0.137 & 0.958 & 0.536 \\ 
$r=n$ (SWR) & 0.008 & 0.216 & 0.152 & 0.141 & 0.941 & 0.552 \\ 
 Naive Bootstrap & -0.003 & 0.213 & 0.283 & 0.240 & 0.781 & 0.941 \\ 
Conformal PI$^*$ & 0.018 & 0.349 & 0.211 & - & 0.961 & 1.884 \\ 
\hline
   \multicolumn{7}{c}{$n=1000, p=3000$} \\ 
  Oracle & -0.001 & 0.005 & 0.083 & 0.092 & 0.968 & 0.360 \\ 
$r=n^{0.8} = 251$ & 0.016 & 0.539 & 0.143 & 0.143 & 0.969 & 0.562 \\ 
\setrow{\bfseries}$r=n^{0.9} = 501$ & 0.009 & 0.290 & 0.105 & 0.111 & 0.951 & 0.436 \\ 
 \setrow{\bfseries}$r=n^{0.95} = 707$ & 0.007 & 0.236 & 0.101 & 0.112 & 0.947 & 0.441 \\ 
 $r=n$ (SWR)  & 0.005 & 0.170 & 0.160 & 0.146 & 0.965 & 0.571 \\ 
 Naive Bootstrap & 0.008 & 0.172 & 0.283 & 0.235 & 0.78 & 0.923 \\
Conformal PI & 0.009 & 0.295 & 0.175 & - & 0.941 & 1.48 \\ 
   \hline
\end{tabular}
\end{table}

 \begin{figure}
    \centering
    \includegraphics[width=\textwidth]
    {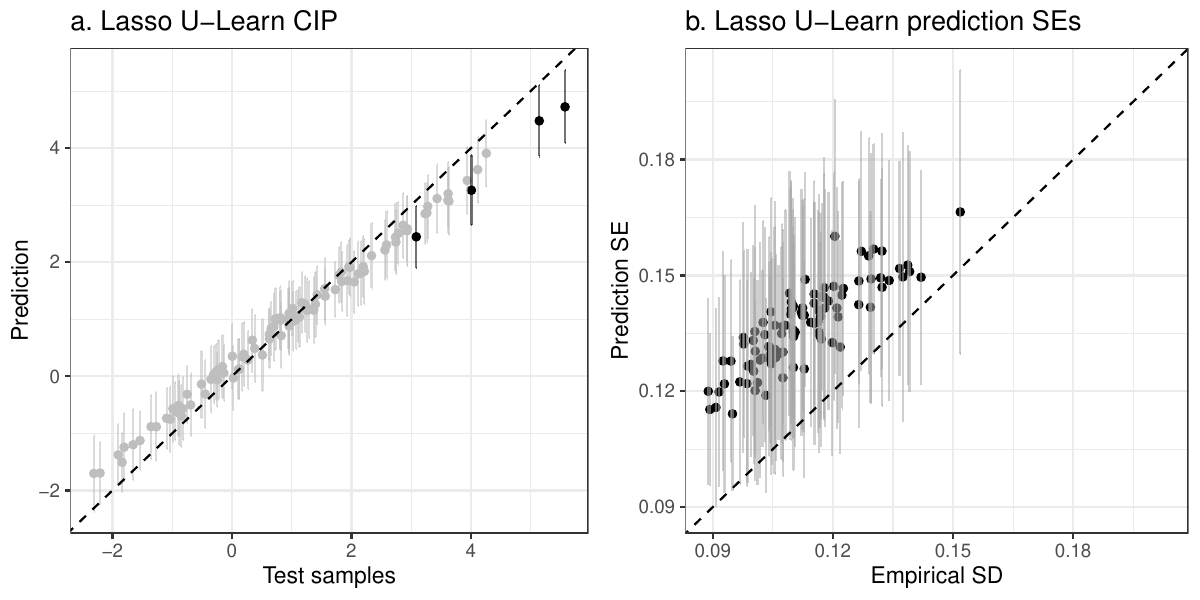}
    \includegraphics[width=\textwidth]{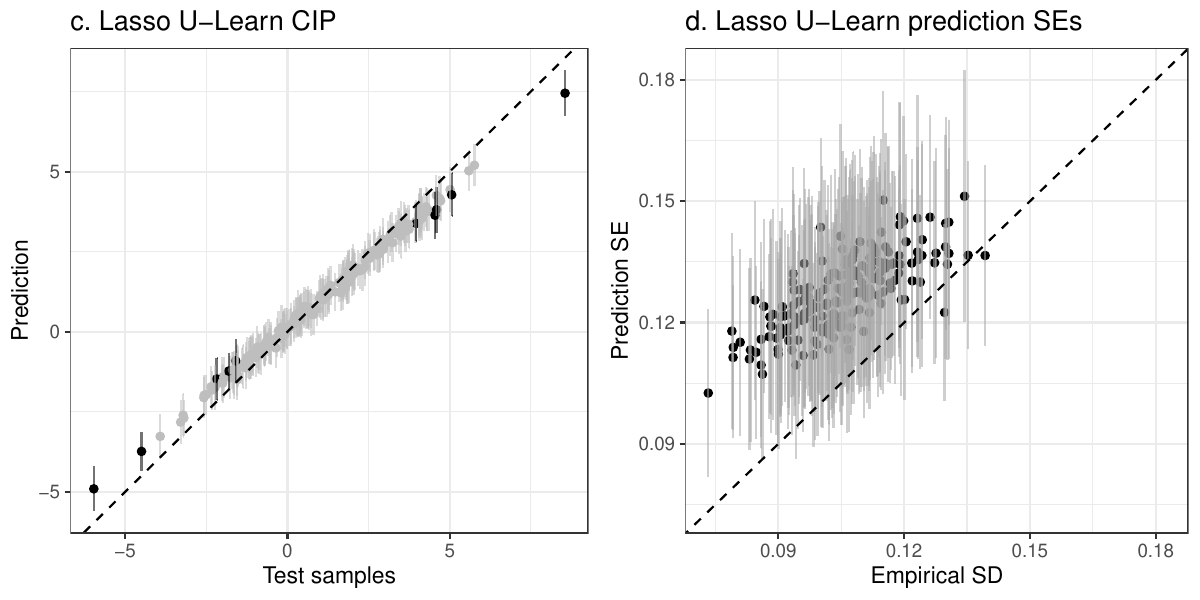}
    \caption{{\bf Prediction and inference in simulation examples 1. \bf a,b)} $n=500$; {\bf c,d)} $n=1000$. Left panels show the average CIP on the test samples; right panels show the prediction SE versus the empirical SD of all test samples. \label{fig_sim1}}
\end{figure}

{\bf Example 2 (DNN U-learning with non-linear truth). }  We considered the following scenarios of non-linear truth, 
\begin{equation*}
    \begin{aligned}
     \text{Scenario 1:}\quad  &f_0(\bx) = - \sin(\pi x_1) + 2 (x_2 - 0.5)^2 + 1.5 x_3x_4 - \frac{5}{x_5},\quad x_{j}\sim Unif(1,2);\\
     \text{Scenario 2:}\quad  &f_0(\bx) = 0.5{x_1^2} -0.3 x_5x_{10} + e^{0.2x_{15}} + \cos{x_{20}},\quad \bx \sim N(\bzero, \Sigma),
    \end{aligned}
\end{equation*}
with a total of {$p=100$} covariates and $\Sigma = \left( 0.5^{|j-k|}\right)_{j,k=1,\ldots,p}$. 
 In each scenario, we simulated 200 training sets with varying subsample sizes $r$ from $n^{0.8}$ to $n^{0.95}$, for $n=200$ and $600$, and a fixed test set of 100 samples. We applied the DNN U-learning algorithm with a network structure of $L=2$ and layer dimensions defined by $\bp = (p, 128, 64, 1)$. 
 Each hidden dense layer is also followed by a dropout layer with a 50\% dropout rate for regularization. We set $B=300$.
For comparisons, we also applied conformal prediction based on 600 training samples.
 Table \ref{tab3}  summarizes  the results obtained for each combination of $n$ and $r$. In each scenario, the $r$ values that yield the coverage probability closest to 95\% are highlighted in bold.
As $n$ increases, the mean bias, MAE, prediction SE and AIL tend to decrease. Both choices of $r$, $r=n^{0.9}$ and $r=n^{0.95}$, result in improved coverage probability (CP) and narrower AIL compared to $r=n^{0.8}$, while the computational times remain comparable across all three scenarios.
 With $n$ increasing, the prediction SE aligns more closely with the empirical SD, and both decrease gradually,  leading to more efficient CIs with coverage probability closer to the nominal level.
Again, conformal PIs are derived by using the true $f_0(\bx_*)$ in lieu of $y_*$ for a fair comparison. 
 Their prediction intervals generally exhibit wider widths and larger AILs, with coverage probabilities exceeding the nominal level,  indicating underperformance compared to our proposed method.
The U-learning method was implemented on a computing cluster with 30 CPU cores, which parallelized the subsampling scheme. This parallelization  reduces computation time compared to running the algorithm on a single core. 

\begin{table}[H]
\centering
\caption{Simulation example 2, DNN U-learning for prediction and inference. The last Time column is average computation time in seconds for one run. \label{tab3}}
\begin{tabular}{>{\rowmac}r>{\rowmac}r>{\rowmac}r>{\rowmac}r>{\rowmac}r>{\rowmac}r>{\rowmac}r>{\rowmac}r<{\clearrow}}
  \hline
 & Bias & MAE & EmpSD & SE  & CP  & AIL & Time\\  
  \hline
   \multicolumn{7}{c}{Scenario 1} \\
  $n=200,r=n^{0.8}$ &0.042 & 0.546 & 0.326 & 0.383 & 0.891 &  1.501 & 15\\ 
  \hspace{5mm} $r=n^{0.9}$ &0.033 & 0.445 & 0.259 & 0.276 & 0.906 & 1.083 & 24\\ 
  \setrow{\bfseries}\hspace{5mm} $r=n^{0.95}$ &0.022 & 0.382 & 0.222 & 0.239  & 0.935 & 0.935& 32\\ 
  $n=600,r=n^{0.8}$ &0.048 & 0.314 & 0.174 & 0.204 & 0.983 & 0.8 & 105\\ 
  \setrow{\bfseries}\hspace{5mm} $r=n^{0.9}$ & 0.030 & 0.230 & 0.170 & 0.195 & 0.954 & 0.803 & 109\\ 
  \hspace{5mm} $r=n^{0.95}$ & 0.022 & 0.189 & 0.168 & 0.184 & 0.942 & 0.840 & 135\\ 
  Conformal PI & 0.023 & 0.187 & 0.227 & - & 0.958 & 1.381 & 12 \\ 
\hline
  \multicolumn{7}{c}{Scenario 2} \\
  $n=200,r=n^{0.8}$ & 0.082 & 0.737 & 0.199 & 0.211 & 0.887 & 0.825 & 22 \\ 
  \hspace{5mm} $r=n^{0.9}$ & 0.080 & 0.520 & 0.215 & 0.221 & 0.937 & 0.866 &  24\\ 
  \setrow{\bfseries}\hspace{5mm} $r=n^{0.95}$ & 0.080 & 0.476 & 0.216 & 0.238 & 0.945 & 0.934 &  27\\ 
  $n=600,r=n^{0.8}$ & 0.084 & 0.482 & 0.148 & 0.138 & 0.881 & 0.541&  74 \\ 
  \setrow{\bfseries}\hspace{5mm} $r=n^{0.9}$ &0.078 & 0.448 & 0.145 & 0.173 & 0.945 & 0.680& 107 \\ 
  \hspace{5mm} $r=n^{0.95}$ &0.078 & 0.416 & 0.128 & 0.170 & 0.961 & 0.666 & 110 \\ 
  Conformal PI & 0.093 & 0.430 & 0.214 & - & 0.969 & 2.335 & 13 \\ 
   \hline
\end{tabular}
\end{table}

{\bf Example 3 (WHO Life Expectancy Data). } As a real data example, we analyzed a publicly available life expectancy dataset collected by WHO during 2000 to 2015 for the majority of countries in the world \citep{kaggle2023life}. 
Features in the model included important immunizations like Hepatitis B, Polio and Diphtheria and mortality factors, economic factors, social factors and other health related factors ($p=20$). Due to the dependency between the years of the same country, regression models for i.i.d. samples were not suitable, and we opted to use Algorithm \ref{alg_nn} with DNN U-learning. We randomly split the data into the training and testing sets ($n=1319$, $\ell = 330$), and applied the proposed approach with a DNN with 3 hidden layers and $\bp=(20,64,128,32,1)$. With $B=1500$, we provided the prediction and inference  in panel (a) of Figure \ref{fig_sim3}. 

The confidence intervals that did not cover the true life expectancy were highlighted in black, with an empirical coverage probability of $0.967$ in all testing samples. The average bias was $-0.12$ years and an MAE of $2.2$ years. The AIL was 16.2 years. There were a total of 9 countries whose PIs did not cover the actual life expectancies, including Bangladesh, Belarus, China, India, Mozambique, among others. Noticeably, all of them were developing countries, except for Romania (Figure \ref{fig_sim3}a).
 Notably, the country exhibiting the most substantial prediction bias, India, where the predicted life expectancy surpasses 100, also displayed a wide confidence interval, albeit not encompassing the observed life expectancy. In contrast, the conformal prediction approach yielded an average bias of $-0.98$ and Mean Absolute Error (MAE) of $2.6$ years, accompanied by confidence intervals with a consistent length of $16.9$ years across all countries. These intervals maintained an average coverage probability of $0.942$ at the population level, yet they did not capture the individual variability within each country (Figure \ref{fig_sim3}b).
It is noted that our approach outperformed the conformal prediction method in terms of covering countries with extreme life expectancies. For instance, the intervals obtained by using the proposed approach effectively covered countries like Malawi with the lowest life expectancy, as well as countries such as France, Germany, and Belgium with the highest life expectancies.

 \begin{figure}
    \centering
    \includegraphics[width=\textwidth]{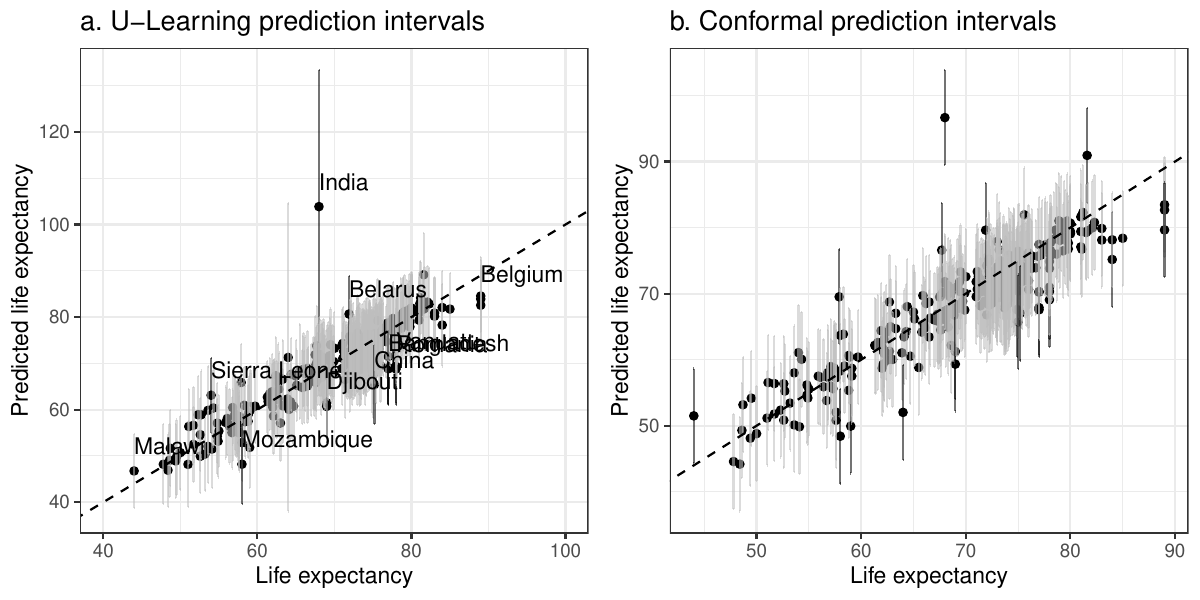}
    \caption{{\bf Prediction intervals by U-learning (left) and Conformal Prediction (right) in Example 3 life expectancy data.} Prediction intervals that do not cover the truth are in black. The labeled countries in panel (a) are (from left to right): Malawi, Sierra Leone, Mozambique, India, Djibouti, Belarus, China, Bangladesh, Romania, Vanuatu, Belgium. 
 \label{fig_sim3}}
\end{figure}

\begin{table}[ht]
\centering
\caption{Coverage of the U-learning PIs by health conditions on the test data. Fisher's exact test is performed for each health condition versus the healthy individuals.\label{tab_health}}
\begin{tabular}{rrrrrrrr}
  \hline
  & Total & \multicolumn{3}{c}{Lasso U-Learn} & \multicolumn{3}{c}{DNN U-Learn} \\
 Cover &  & 0 & 1 & Fisher's p & 0 & 1 & Fisher's p \\ 
  \hline
  Healthy &  72 &   0 &  72 &  &   0 &  72 & \\ 
\textbf{Alzheimer}  & 11& {\bfseries 4} &   {\bfseries 7} & {\boldmath $<0.001$}  &   {\bfseries 4} &   {\bfseries 7} & {\boldmath $<0.001$} \\ 
  ChronCond & 50&   3 &  47 & 0.07 &   1 &  49 & 0.41 \\ 
  HIV & 140 &   5 & 135 & 0.17 &   7 & 133 & 0.10 \\ 
  OtherBrain & 6 &   1 &   5 & 0.08 &   1 &   5 & 0.08 \\ 
  Parkinson & 24 &   1 &  23 & 0.25 &   1 &  23 & 0.25 \\ 
   \hline
\end{tabular}
\end{table}

 \section{Epigenetic Clocks with Human Methylation Data Across Multiple Tissues}

 The aging process is linked to methylation levels at specific individual CpG sites and is collectively associated with subsets of CpGs \citep{lin2016dna}. DNA methylation age, also known as epigenetic aging clocks \citep{horvath2013dna, bell2019dna, yu2020epigenetic},  employs regression models based on sets of CpG sites to predict the age of subjects at the methylation level. DNA methylation age (or simply DNA age hereafter) correlates with chronological age and holds potential for quantifying biological aging rates, as well as evaluating longevity or rejuvenating interventions \citep{marioni2015dna, lu2019dna}.

  There is a lack of uncertainty measures associated with DNA age. This challenge  motivated us to incorporate prediction and inference methods in the development of DNA age. We compiled DNA methylation data from individuals covering a wide age spectrum, ranging from 8 to 101 years old. Methylation levels were evaluated at more than 37,000 CpG sites spanning the entire human genome. Our dataset encompassed 522 blood samples and 781 non-blood samples from diverse tissue types, such as lung, liver, kidney, heart, and others. The samples comprised of healthy individuals, as well as those with various health conditions (Table \ref{tab_health}). In the following, we present comprehensive analyses with both Lasso U-learning and DNN U-learning approaches, and explore modeling the blood and non-blood samples separately or jointly.

 To assess model performance and address data heterogeneity, we started by applying Lasso U-learning separately to blood and non-blood samples.
Let $B=3000$, and the subsample size $r = n^{0.95}$. We utilized ``out-of-bag" (OOB) estimates to obtain unbiased predictions for all samples. For each subsample $b$, DNA age was predicted solely on its complement set. The performance of the two U-learning clocks concerning blood samples and non-blood samples is summarized in Figure \ref{fig_oob_DNAmAge} and Table \ref{tab:dnamage} (first two columns). Both clocks exhibit accuracy in terms of bias (-0.01 vs. 0.04) and correlation (0.99 versus 0.93) between predicted DNA age and chronological age. The notable distinction lies in the blood clock providing more accurate predictions with smaller variances, evident in the error bars and the comparison of standard errors (SE) in the Figure. Traditional aging clocks, lacking robust measures of uncertainty, might overlook such differences between homogeneous and heterogeneous samples without these new tools for prediction inference.

\begin{table}[hb]
\centering
\caption{DNA age prediction and inference in the human methylation data. Column names indicate the samples being evaluated and the model used in the parentheses.\label{tab:dnamage}}
\begin{tabular}{rrrrr}
  \hline
 & \makecell{Blood \\ (Lasso U-Learn)} & \makecell{Non-blood \\ (Lasso U-Learn)} & \makecell{Test \\(Lasso U-Learn)} & \makecell{Test \\(DNN U-Learn)} \\ 
  \hline
Bias & -0.01 & 0.04 & -0.45 & -0.44\\ 
  MAE & 2.11 & 3.65 & 3.29 & 3.0\\ 
  Corr & 0.99 & 0.93 & 0.97 & 0.97\\ 
  Mean SE & 0.84 & 2.47 & 1.70 & 1.78\\ 
  Min SE & 0.24 & 1.22 & 0.42 & 0.65\\ 
  Max SE & 3.23 & 16.87 & 10.70 & 6.58\\ 
  AIL & - & - & 0.954 & 0.954 \\
   \hline
\end{tabular}
\end{table}

 \begin{figure}
    \centering
    \includegraphics[width=\textwidth]{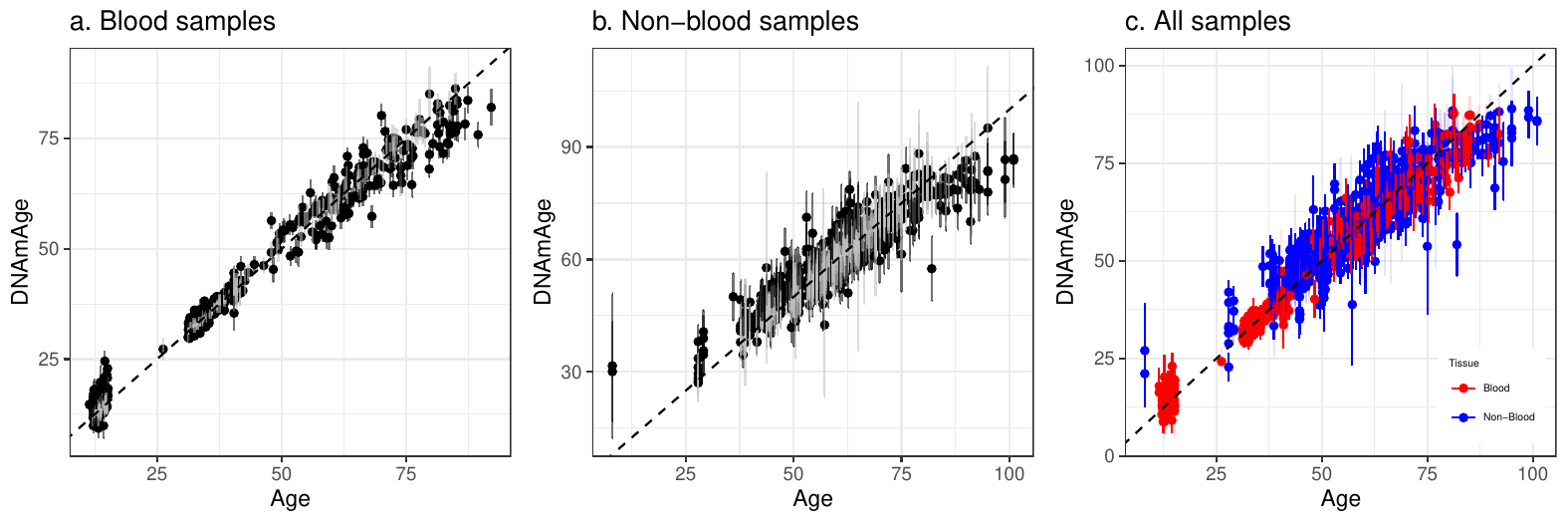}
    \caption{DNA age based on out-of-bag predictions and the prediction intervals in three clocks: a. blood samples; b. non-blood samples; c. all samples.\label{fig_oob_DNAmAge}}
\end{figure}



 To compare Lasso U-learning and DNN U-learning, we further analyzed the dataset by randomly splitting the total 1303 samples into 1000 training samples and 303 testing samples. Implementing Algorithm \ref{alg_nn}, we performed screening on the training data, reducing it to 2905 CpG sites in addition to the 13 tissue indicators. Hyperparameter search involved exploring the number of hidden layers $L$, the numbers of neurons $\bp$, and the learning rate. The final model selected was a 1-hidden-layer NN with $L = 3$, $\bp = (2918, 256, 1)$, and the hidden layer was followed by a dropout layer with a 0.5 dropout rate. 
 
  As indicated in columns 3 and 4 of Table \ref{tab:dnamage}, the DNN U-learning clock exhibited a lower Mean Absolute Error (MAE) and comparable bias to the Lasso U-learn clock. The average prediction variances were slightly larger for DNN U-learning compared to those from Lasso-based inference. The prediction variances by DNN U-learning displayed less variability than those by Lasso U-learning, yet both achieved the same coverage probability of $ACIL = 0.954$.
  
 Overall, U-learning with both Lasso and NN resulted in comparable predictions and confidence intervals (Table \ref{tab_health}, Figure \ref{fig_NN_DNAmAge}). The majority of the samples not covered were from non-blood tissues, with 2 out of 120 blood samples and 12 out of 183 non-blood samples not covered by DNN U-learning (Figure \ref{fig_NN_DNAmAge}a). Similarly, 1 blood sample and 13 non-blood samples had ages not covered by the PIs from Lasso U-learning (Figure \ref{fig_NN_DNAmAge}b). This suggested that age prediction in non-blood samples was more variable, possibly due to un-captured variability arising from the smaller number of samples per tissue.

 \begin{figure}
    \centering
    \includegraphics[width=\textwidth]{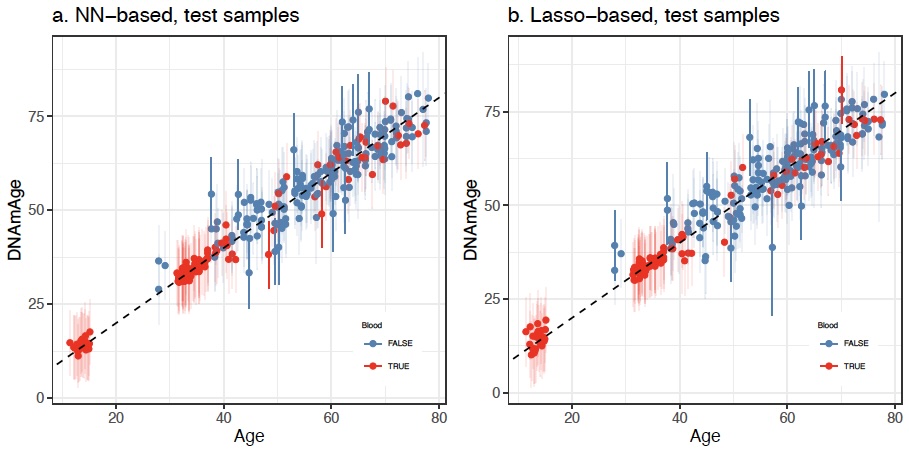}
    \includegraphics[width=\textwidth]{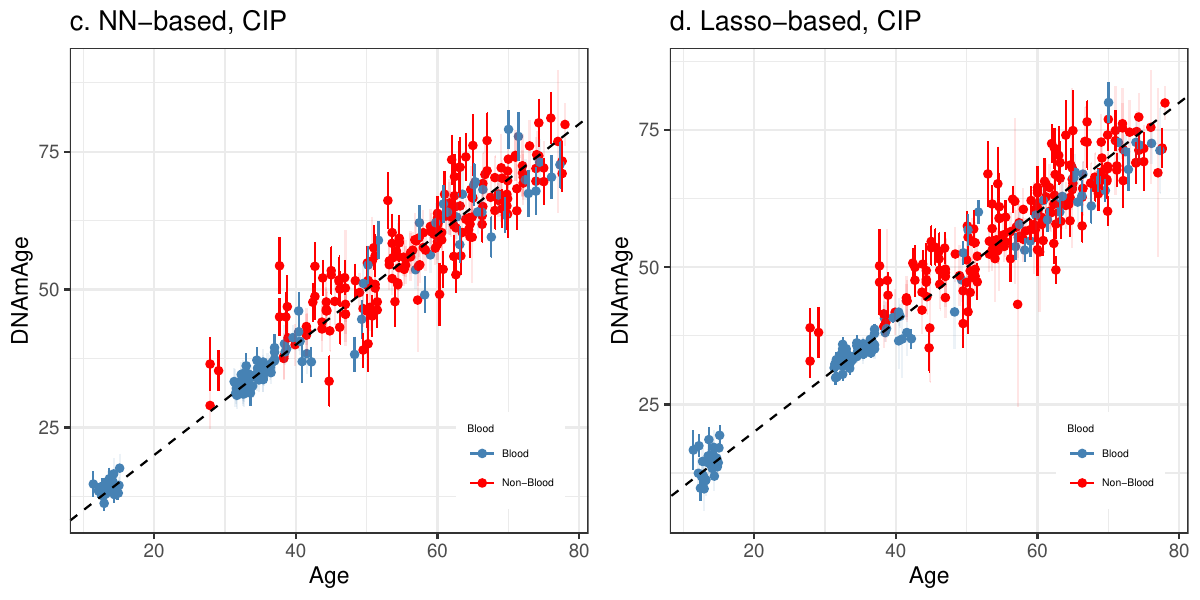}
    \caption{DNA age prediction intervals and CIP for all test samples based on two models.  \label{fig_NN_DNAmAge}} 
\end{figure}

 We also examined the U-learning PIs' coverage across various health conditions of individuals, as detailed in Table \ref{tab_health}. Patients were categorized based on their primary diagnoses, including Healthy, Alzheimer's Disease, Other Neuro-degenerative Diseases (OtherBrain), HIV, Chronic Diseases (ChronCond), and Parkinson's Disease. For all 72 healthy individuals, the PIs covered their chronological ages, affirming the accuracy of DNA ages and the reliability of inferences derived from our method. In comparison to the healthy group, the proportions of PI coverage in other disease groups were similar, except for Alzheimer's Disease, where 4 out of 11 individuals exhibit non-covered PIs. Notably, the DNA ages of these 4 individuals significantly surpassed their chronological ages, with a mean difference of 11.6 years. This outcome provided additional confirmation of the connection between DNA methylation and Alzheimer's Disease, consistent with findings reported in recent studies
\citep{mccartney2018investigating,sugden2022association,milicic2023utility}.

 In summary, our analysis offered a new perspective by pinpointing individuals whose ages notably deviated from their predicted DNA ages. Subsequent experiments may delve into the underlying biological differences among these individuals, paving the way for further exploration and understanding.


\section{Conclusions}\label{sec5}
	
We have proposed a U-learning procedure for valid prediction inference that can be applied to high-dimensional linear regressions with Lasso, as well as non-parametric machine learning algorithms like the multi-layer neural networks. The inference and confidence intervals are subject-specific that estimate the prediction variance conditional on each future sample.
The theoretical properties are derived based on generalized U-statistics and H\'ajek projections. We have illustrated the methods by comparison with conformal predictions in several numerical examples. A comprehensive real data analysis of a large scale DNA methylation study on human aging was presented, resulting in novel epigenetic clocks with valid confidence intervals that can quantify the uncertainty in predictions.

A natural extension of the proposed method is to the classification problems that involve binary or multi-class outcomes. We envision that the generalized U-statistic techniques can be updated to suit the problems.
Future research is also warranted in further improving the computational efficiency of the U-learning methods, especially in the context of neural networks.
This may be achieved by exploring the network architectures, i.e., $(L,\bp)$, and modifications to the optimization algorithms. For example, theoretical investigation of dropout layers in place of the subsampling scheme to incorporate ensemble within the training of one NN. Nevertheless, this work lays the foundation towards valid inferences with deep learning models and unveiling their black box nature.

We may further 
derive  \emph{prediction intervals}  for the unobserved $y_*$, 
which requires estimation of the error distribution; see the related literature on conformal predictions  \citep{papadopoulos2008inductive,lei2018distribution,messoudi2020conformal}.
Consider a specific extension when the errors are i.i.d. normally distributed. With $\Delta_* = y_* - \wh{f}(\bx_*)$,  note that
\begin{equation*}
\Delta_*    = \left(f_0(\bx_*) -  \wh{f}(\bx_*)\right) + \varepsilon_*
 \end{equation*}
 leads to
   \begin{equation*}\label{pred_var}
     \var(\Delta_*) = {\sigma}_*^2  + {\sigma}_\varepsilon^2,
 \end{equation*}
 as $\varepsilon_*$ is independent of $\{f_0(\bx_*) -  \wh{f}(\bx_*)\}$.
The first term on the right hand side reflects the model uncertainty, and is subject-dependent, while the second term  is the random \emph{error variance} attached to all observations. We will pursue this in future works.

\clearpage


\begin{appendix}\label{appn}

\section*{Proofs of Theorem \ref{thm_normality} and Corollary \ref{thm_var}}

We first introduce the following Lemma that gives a Lipschitz-type condition of the Lasso predictions on subsamples.

\begin{lemma}\label{lem_lip}
     Consider the Lasso predictions based on subsamples $\cT^1 = \left((\bx_1, y_1),\ldots, (\bx_r, y_r) \right)$ and $\cT^{1*} = \left((\bx_1, y_1),\ldots, (\bx_r, y_r^*) \right)$, with $y_r = f_0(\bx_r) + \varepsilon_r$, $y_r^* = f_0(\bx_r) + \varepsilon_r^*$, $\varepsilon_r$ and $\varepsilon_r^*$ are i.i.d. errors satisfying \ref{C1}. There exists a constant $c$ such that for all $r>1$,
     the following  Lipschitz-type condition is satisfied  
\begin{equation}\label{cond_lip}
    |\wt{f}\left(\bx_{r+1}; \cT^1 \right) - \wt{f}\left(\bx_{r+1}; \cT^{1*} \right)| \le c|y_{r} - y^*_{r}|,
\end{equation}
where $\wt{f}$ is defined in (\ref{lp_sub}). 
\end{lemma}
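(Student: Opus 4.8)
The plan is to exploit the fact that, with the $\ell_1$-constraint $\|\bbeta\|_1 \le K_n$ held fixed across all subsamples, the Lasso prediction $\wt f(\,\cdot\,;\cT^1)$ is the value at $\bx_{r+1}$ of a solution to a constrained least-squares problem, and that the only difference between the two problems is the single coordinate $y_r$ versus $y_r^*$ of the response vector. First I would write the fitted prediction vectors on the training indices as $\widehat{\bY} = \bX_{\cT^1}\wt\bbeta + \wt\beta_0\bone_r$ and $\widehat{\bY}^* = \bX_{\cT^{1*}}\wt\bbeta^* + \wt\beta_0^*\bone_r$; these are the Euclidean projections of $\bY$ and $\bY^*$ respectively onto the convex set $\mathcal{C} = \{\beta_0\bone_r + \bX_{\cT^1}\bbeta : \|\bbeta\|_1 \le K_n\}$, which is the same set in both cases since the design rows are identical. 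By the non-expansiveness of projection onto a convex set, $\|\widehat{\bY} - \widehat{\bY}^*\|_2 \le \|\bY - \bY^*\|_2 = |y_r - y_r^*|$.

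The remaining step is to pass from the bound on the fitted \emph{training} values to a bound on the prediction at the new point $\bx_{r+1}$, i.e. to control $|(\wt\beta_0 - \wt\beta_0^*) + \bx_{r+1}(\wt\bbeta - \wt\bbeta^*)|$ by a constant times $\|\widehat{\bY}-\widehat{\bY}^*\|_2$. Here I would split into the intercept and slope parts. For the slope, use the RE condition from~\ref{C1}: since both $\wt\bbeta$ and $\wt\bbeta^*$ are feasible ($\ell_1$-balls of radius $K_n$), their difference $\bDelta = \wt\bbeta - \wt\bbeta^*$ has bounded $\ell_1$ norm, and after centering the design one gets $\tfrac1r\|\bX_{\cT^1}\bDelta\|_2^2 \gtrsim \kappa\|\bDelta\|_2^2$, so $\|\bDelta\|_2 \lesssim \|\bX_{\cT^1}\bDelta\|_2/\sqrt{r} \lesssim \|\widehat{\bY}-\widehat{\bY}^*\|_2/\sqrt{r}$ up to handling the intercept term; then $|\bx_{r+1}\bDelta| \le \|\bx_{r+1}\|_\infty \|\bDelta\|_1 \le M\sqrt{p}\,\|\bDelta\|_2$ using $\|\bx_{r+1}\|_\infty < M$. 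The intercept difference is controlled by averaging the fitted-value identity over the $r$ training points. Combining these gives $|\wt f(\bx_{r+1};\cT^1) - \wt f(\bx_{r+1};\cT^{1*})| \le c\,|y_r - y_r^*|$ for a constant $c$ depending on $M,\kappa$ and the relevant dimensions, uniformly in $r>1$.

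The main obstacle I anticipate is the second step: the fitted-values map being non-expansive does not by itself control the \emph{parameters}, and a naive use of RE is delicate because the RE inequality as stated is a population/full-design statement with a cone restriction $\|\bDelta_{S^c}\|_1 \le c_\alpha\|\bDelta_S\|_1$, whereas here $\bDelta$ need not lie in any such cone and the relevant design is a size-$r$ submatrix. I would therefore argue either (i) that the feasibility constraint $\|\wt\bbeta\|_1, \|\wt\bbeta^*\|_1 \le K_n$ is enough to place $\bDelta$ in a fixed $\ell_1$-ball, so that $|\bx_{r+1}\bDelta|$ is automatically bounded and the needed Lipschitz constant can be read off directly from continuity of the projection composed with the linear evaluation functional on that compact set, or (ii) invoke a restricted-eigenvalue-type lower bound valid on the whole $\ell_1$-ball (a ``compatibility on the $\ell_1$-ball'' statement, which follows from~\ref{C1} with a possibly $p$-dependent constant). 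Either route yields the stated constant $c$; I would present the cleaner of the two in the final proof and note that $c$ may depend on $p$ and $K_n$ but not on $r$, which is all that is needed downstream.
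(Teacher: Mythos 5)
Your overall strategy is essentially the paper's: both arguments view the Lasso fitted vector as the Euclidean projection of the response onto the convex set of achievable fits, bound the perturbation of the fitted values caused by replacing $y_r$ with $y_r^*$, convert that into a bound on $\|\wt{\bbeta}^{K}-\wt{\bbeta}^{*K}\|_2$ via a restricted-eigenvalue-type inequality for the size-$r$ design, and finish with Cauchy--Schwarz at the test point. Two differences are worth noting. First, where you invoke plain non-expansiveness of the projection, $\|\wh{\bY}-\wh{\bY}^*\|_2 \le |y_r-y_r^*|$, the paper uses the variational inequality for the two projections to get $\|\wt{\bY}-\wt{\bY}^*\|_2^2 \le (\bY-\by_*)\cdot(\wt{\bY}-\wt{\bY}^*) = (\varepsilon_r-\varepsilon_r^*)\,\bx_r\wt{\Delta}$, so the perturbation enters only through the single changed coordinate paired with $\bx_r$; combined with the RE step this produces a Lipschitz constant of order $\sqrt{s}/r$ rather than your $1/\sqrt{r}$ scaling. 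Both are uniform in $r$, so your weaker bound still yields the lemma as stated (and you treat the intercept explicitly, which the paper's proof silently drops from the constraint set).

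Second, of your two proposed fixes for the cone-restriction issue, only (ii) is viable. Route (i) fails: continuity on a compact set gives no Lipschitz constant, boundedness of $|\bx_{r+1}\bDelta|$ over a fixed $\ell_1$-ball cannot produce a bound proportional to the (possibly tiny) quantity $|y_r-y_r^*|$, and for $p>r$ the coefficient map $\bY \mapsto \wt{\bbeta}$ need not be single-valued or continuous (only the fitted values are unique), so there is no well-defined ``projection composed with evaluation'' map to appeal to. You therefore must argue as in (ii): a restricted-eigenvalue or compatibility bound valid for the difference of two feasible points, which lies in a $2K_n$-radius $\ell_1$-ball but in no sparsity cone. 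This is in effect what the paper itself does implicitly---it applies the RE inequality directly to $\wt{\Delta}$ without verifying the cone condition and asserts $\|\wt{\Delta}\|_1 \le 4\sqrt{s}\,\|\wt{\Delta}\|_2$ for an unspecified $s=o(r)$---so the delicacy you flag is real, is shared by the paper's own argument, and should be stated as an explicit strengthening of \ref{C1} rather than left as an alternative.
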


\begin{proof} 
Note that $\cT^1$ and $\cT^{1*}$ only differ in the last observation, and for notational convenience, let $\varepsilon_i^* = \varepsilon_i$ for $i <r$. Within this proof, denote $\bX = (\bx_1^\rT,\ldots, \bx_r^\rT)^\rT = (\bX_1, \ldots, \bX_p)$, $\bY = (y_1,\ldots, y_r)^\rT$, and $\by_* = (y_1,\ldots, y_r^*)^\rT$. Define the set
\[
C:= \{\beta_1\bX_1+ \ldots+ \beta_p\bX_p: \|\bbeta\|_1 \le K \}
\]
Further denote the Lasso solutions based on $\cT^1$ and $\cT^{1*}$ as $\wt{\bbeta}^{K}$ and $\wt{\bbeta}^{*K}$ respectively. By definition, $\wt{\bY} = \bX\wt{\bbeta}^{K}$ and $\wt{\bY}^* = \bX\wt{\bbeta}^{*K}$ are the projections of $\bY$ and $\by_*$ onto $C$, respectively. Since $C$ is convex, for any $\bv \in C$, $(\bv - \wt{\bY})\cdot (\bY - \wt{\bY})\le 0$. Take $\bv = \wt{\bY}^*$, we have $(\wt{\bY}^* - \wt{\bY})\cdot (\bY - \wt{\bY})\le 0$. Then
\begin{equation*}
\begin{aligned}
    \|\wt{\bY}^* - \wt{\bY}\|_2^2 \le&  (\bY - \wt{\bY}^*) \cdot (\wt{\bY} -\wt{\bY}^*) \\
    =& (\bY - \by_* + \by_* - \wt{\bY}^*) \cdot (\wt{\bY} -\wt{\bY}^*)\\
    \le& (\bY - \by_*) \cdot (\wt{\bY} -\wt{\bY}^*) \\
    =& (\varepsilon_r - \varepsilon_r^*)\bx_r (\wt{\bbeta}^{K} -\wt{\bbeta}^{*K}).
\end{aligned}
\end{equation*}
Note we used $(\by_* - \wt{\bY}^*) \cdot (\wt{\bY} -\wt{\bY}^*)\le 0$ and that $\bY$ and $\by_*$ only differed in the last observation in the above derivation. Denote $\wt{\Delta} = \wt{\bbeta}^{K} -\wt{\bbeta}^{*K}$, 
and apply the Restricted Eigenvalue (RE) condition, we have
\begin{equation*}
\begin{aligned}
    \kappa \|\wt{\Delta}\|_2^2 \le& \frac{1}{r} \|\wt{\bY}^* - \wt{\bY}\|_2^2 \\
    \le& |\varepsilon_r - \varepsilon_r^*|\frac{\|\bx_r\|_\infty}{r} \|\wt{\Delta}\|_1 \\
    \le& |\varepsilon_r - \varepsilon_r^*|\frac{\|\bx_r\|_\infty}{r} 4\sqrt{s} \|\wt{\Delta}\|_2.
    \end{aligned}
\end{equation*}
Note here $\wt{\Delta}$ is the difference between the two Lasso solutions, hence $s$ is not related to the sparsity of the true $\bbeta^0$, rather depends on $r$ and $K$ with $s = o(r)$.
Therefore,
\begin{gather*}
    \|\wt{\Delta}\|_2 = \|\wt{\bbeta}^{K} -\wt{\bbeta}^{*K}\|_2 \le \frac{4\sqrt{s}}{\kappa r}|\varepsilon_r - \varepsilon_r^*| \|\bx_r\|_\infty,
\end{gather*}
where $\kappa$ is the RE constant. Note it is a generalization of the classic $\ell_2$ error bound on the Lasso solution (for example, Theorem 7.13(b) in \cite{wainwright2019high}).
Further denote $c_{r+1}^2 = \bx_{r+1}\bx_{r+1}^\rT$, we have
\begin{equation*}
\begin{aligned}
    \text{LHS of (\ref{cond_lip})} =& \left| \bx_{r+1}\wt{\bbeta}^{K} - \bx_{r+1}\wt{\bbeta}^{*K} \right|\\
    \le& \|\bx_{r+1}\|_2 \|\wt{\bbeta}^{K} -\wt{\bbeta}^{*K}\|_2  \\
    \le& \frac{4c_{r+1}\sqrt{s}}{\kappa r}|\varepsilon_r - \varepsilon_r^*| \|\bx_r\|_\infty \\
    \le& \frac{4c_{r+1}M\sqrt{s}}{\kappa r} |\varepsilon_r - \varepsilon_r^*|.
\end{aligned}
\end{equation*}
Therefore, by choosing $c = \frac{4c_{r+1}M}{\kappa}$, (\ref{cond_lip}) holds for all $r>1$. 
\end{proof}

Before presenting the proof of Theorem \ref{thm_normality},  we introduce the H\'ajek projection \citep{van2000asymptotic,wager2018estimation}, which helps the derivation of U-statistic properties. Given a complex statistic $T$ and independent training samples $Z_1,Z_2,..,Z_n$, the H\'ajek projection of $T$ is defined to be 
	\begin{equation*}
	    \mathring{T} = \E[T] + \sum_{i=1}^{n} \left( \E[T| Z_i] - \E[T]\right),
	\end{equation*}
	i.e.,  a projection  onto a linear subspace of all random variables of the form 
 $ \sum_{i=1}^ng_i(Z_i)$, where $g_i(\cdot)$ are measurable functions.
	It follows immediately {that $\E(\mathring{T}) = E(T)$}.

 As the H\'ajek projection $\mathring{T}$ is a sum of independent random variables, we first show it is asymptotically normal under general conditions. Then, we set to bound the asymptotic difference between $T$ and $\mathring{T}$, and by using Slutsky's theorem, show that the original $T$ will be asymptotically normal as well.
    
\begin{proof}[Proof of Theorem \ref{thm_normality}]
We present the proof in the following 3 key steps.

{\bf Step 1. Asymptotic normality of the H\'ajek projection.} 
For notational ease, we first define $${T}_{n,r,B} = \wh{y}^B_* - f_0(\bx_*), $$
and write its H\'ajek projection
as
\begin{equation}\label{t_ring}
\mathring{T}_{n,r,B} =  \sum_{i=1}^{n} \E\left( {T}_{n,r,B}  | Z_i\right) = \sum_{i=1}^{n} \E\left(\wh{y}^B_* - f_0(\bx_*) | Z_i\right),
	\end{equation}
 where each term in the summation is
\begin{equation*}
\begin{aligned}
\E\left( \wh{y}^B_* - f_0(\bx_*) | Z_i\right) =&   \E\left( \frac{1}{B}\sum_{b=1}^{B} \wt{f}^{b}(\bx_*) - f_0(\bx_*) | Z_i\right) \\
=& \frac{1}{B} \sum_{b=1}^{B} \E\left( \wt{f}^{b}(\bx_*) - f_0(\bx_*) | Z_i\right).
\end{aligned}
\end{equation*} 
 Let $h_{1,r}(x) = \E\wt{f}^b(\bx_*; x, Z_2,\ldots, Z_r) - f_0(\bx_*)$ 
and $W$ be the number of subsamples that contain $i$, we have $W \sim Binom(B, \binom{n-1}{r-1}/{\binom{n}{r}} )$, and it follows
 \begin{equation*}
\begin{aligned}
\frac{1}{B} \sum_{b=1}^{B} \E\left( \wt{f}^{b}(\bx_*) - f_0(\bx_*) | Z_i\right) =&   \frac{1}{B} \sum_{b=1}^{B} \E\left( \E\big(  \wt{f}^{b}(\bx_*) - f_0(\bx_*) | Z_i\big) | W \right) \\
=& \frac{1}{B} \E\left(W h_{1,r}(Z_i) \right) \\
=& \frac{1}{B} \left(B \binom{n-1}{r-1}/{\binom{n}{r}} h_{1,r}(Z_i) \right)\\
=& \frac{r}{n} h_{1,r}(Z_i).
\end{aligned}
\end{equation*} 
Plug this back into (\ref{t_ring}) yields
\begin{equation*}
\mathring{T}_{n,r,B} = \frac{r}{n} \sum_{i=1}^{n} h_{1,r}(Z_i)  .
\end{equation*}
To show the asymptotic normality of $\mathring{T}_{n,r,B}$, we can define the triangular array for $ r_nh_{1,r_n}(Z_i)$'s and verify the Lindeberg condition  as below.
For any $\delta > 0$, 
\begin{align*}
&  \sum_{i=1}^{n} \frac{1}{n r^2_{n} \xi_{1,r_n}} \int_{|r_{n}h_{1,r_n}(Z_i)|\geq\delta\sqrt{n\xi_{1,r_n}}} r^2_{n} h^2_{1,r_n}(Z_i)dP \\
&=  \sum_{i=1}^{n} \frac{1}{n \xi_{1,r_n}} \int_{|h_{1,r_n}(Z_i)|\geq\delta\sqrt{n\xi_{1,r_n}}} h^2_{1,r_n}(Z_i)dP \\
&=  \frac{1}{\xi_{1,r_n}} \int_{|h_{1,r_n}(Z_1)|\geq\delta\sqrt{n\xi_{1,r_n}}} h^2_{1,r_n}(Z_1)dP. 
\end{align*}
The above Lindeberg condition is satisfied by Lemma \ref{lem_lip}.
To see it, we repeatedly apply (\ref{cond_lip}) and Jensen's inequality to get the following bound,
\begin{align*}
& \sup_{\bx_1 \in \mathcal{X}} \left| h_{1,r_n}\left( (\bx_1, y_1^*) \right) \right| \\
= & \sup_{\bx_1 \in \mathcal{X}} \Bigg| \int \Big( \wt{f}\big( (\bx_1, y_1^*), (x_2, y_2), \ldots, (x_{r_n}, y_{r_n}) \big) - \wt{f}\big( (\bx_1, y_1^*), \ldots, (x_{r_n}, y_{r_n}^*) \big) \Big) \\
& \quad + \wt{f}\big( (\bx_1, y_1^*), \ldots, (x_{r_n}, y_{r_n}^*) \big) \, dP - f_0 \Bigg| \\
\leq & \sup_{\bx_1 \in \mathcal{X}} \int \Big( \wt{f}\big( (\bx_1, y_1^*), (x_2, y_2), \ldots, (x_{r_n}, y_{r_n}) \big) - \wt{f}\big( (\bx_1, y_1^*), \ldots, (x_{r_n}, y_{r_n}^*) \big) \Big) \, dP \\
& + \sup_{\bx_1 \in \mathcal{X}} \wt{f}\big( (\bx_1, y_1^*), \ldots, (x_{r_n}, y_{r_n}^*) \big) - f_0 \\
\leq & c r_n \mathbb{E} \left| \epsilon_1 \right| + M - f_0.
\end{align*}
Then
\begin{align*}
A_n &= \left\{ \left| h_{1,r_n}\left( (\bx_1, y_1) \right) \right| \geq \delta \sqrt{n \zeta_{1,r_n}} \right\} \\
   &= \left\{ \left| h_{1,r_n}\left( (\bx_1, y_1) \right) - h_{1,r_n}\left( (\bx_1, y_1^*) \right) + h_{1,r_n}\left( (\bx_1, y_1^*) \right) \right| \geq \delta \sqrt{n \zeta_{1,r_n}} \right\} \\
   &\subseteq \left\{ \left| h_{1,r_n}\left( (\bx_1, y_1) \right) - h_{1,r_n}\left( (\bx_1, y_1^*) \right) \right| \geq \delta \sqrt{n \zeta_{1,r_n}} - \left| h_{1,r_n}\left( (\bx_1, y_1^*) \right) \right| \right\} \\
   &\subseteq \left\{ \left| \epsilon_1 \right| \geq \frac{1}{c} \left( \delta \sqrt{n \zeta_{1,r_n}} + M - f_0 \right) + r_n \mathbb{E} \left| \epsilon_1 \right| \right\} \\
   &:= A_n^*.
\end{align*}
In the above derivation, we used the bound
\[
\sup_{\bx_1\in \cX} \left| h_{1,r_n}\left( (\bx_1, y_1^*) \right) \right| \le cr_n \mathbb{E} \left| \epsilon_1 \right| + M - f_0.
\]
Further we have
\begin{align*}
& \frac{1}{\xi_{1,r_n}} \int_{A_n} h^2_{1,r_n}\left((\bx_1, y_1) \right)dP \\
&=  \frac{1}{\xi_{1,r_n}} \int_{A_n} \left(h_{1,r_n}\big((\bx_1, y_1)\big) - h_{1,r_n}\big((\bx_1, y_1^*)\big) + h_{1,r_n}\big((\bx_1, y_1^*)\big) \right)^2 dP \\
&\leq \frac{2}{\xi_{1,r_n}}  \int_{A_n} (h_{1,r_n}((\bx_1, y_1)) - h_{1,r_n}((\bx_1, y_1^*)))^2 dP \\
&\quad   +  \frac{2}{\xi_{1,r_n}} \int_{A_n } h^2_{1,r_n}((\bx_1, y_1^*))dP \\
&\leq \frac{2}{\xi_{1,r_n}} \int_{A_n^*} c^2 \varepsilon_1^2 dP +  \frac{2}{\xi_{1,r_n}} P(A^*_n) (c r_n \E|\varepsilon_1| + M - f_0)^2 \\
& =  \ \frac{2}{\xi_{1,r_n}} P\left[|\varepsilon_1| \geq \frac{1}{c}(\sqrt{n \xi_{1,r_n}} + M - f_0) +  \E|\varepsilon_1| \right] \times (c r_n E|\varepsilon_1| + M - f_0)^2 \\
& \rightarrow 0.
\end{align*}  
Here the second inequality is by (\ref{cond_lip}), and the last equality is due to the independence between $A^*_n$ and $\varepsilon_1$.
Therefore the Lindeberg condition is satisfied, and by the Lindeberg-Feller central limit theorem,
\begin{equation}\label{T_ring}
    \frac{\sqrt{n} \mathring{T}_{n,r,B} }{\sqrt{r_n^2 \xi_{1,r_n}} }\xrightarrow{d} N(0,1).
\end{equation}


{\bf Step 2. Asymptotic normality of $T_{n,r,B}$.} The base learner $\wt{T} = \wt{f}(\bx_*) - f_0(\bx_*)$ is symmetric in the subsamples $Z_1,Z_2,\ldots, Z_r$.
The Efron-Stein ANOVA decomposition \citep{efron1981jackknife} states that there exist functions $T_1, T_2,.., T_r$ such that
	\begin{equation*}
	    \wt{T}(Z_1,Z_2,..,Z_r) = \sum_{i=1}^{r}T_1(Z_i) + \sum_{i<j} T_2(Z_i, Z_j) + ... +
	    T_r(Z_1,Z_2,..,Z_r),
	\end{equation*}
	and that all $2^r - 1$ random variables on the right hand side are all mean-zero and uncorrelated.
Applying the ANOVA decomposition to each base learner in $T_{n,r,B}$ as in
$T_{n,r,B} = \wh{y}^B_* - f_0(\bx_*) =\frac{1}{B} \sum_{b=1}^{B}\wt{T}^b$ yields
\begin{gather*}
    T_{n,r,B} = \frac{1}{B}\left( \binom{n-1}{r-1} \sum_{i=1}^{n}T_1(Z_i) + \binom{n-2}{r-2} \sum_{i<j} T_2(Z_i, Z_j) + \right.\\
  \left.  \ldots + \sum_{i_1<\ldots<i_r} T_r(Z_{i_1},\ldots,Z_{i_r}) \right)
\end{gather*}
As with all projections,
\begin{align*}
\E \left[ \left({T}_{n,r,B} - \mathring{T}_{n,r,B}\right)^2 \right] &= \var\left[ {T}_{n,r,B} - \mathring{T}_{n,r,B}\right] \\
&= \sum_{k=2}^{r} \left(\frac{r_k}{n_k} \right)^2 \binom{n}{k} V_k, \\
&= \sum_{k=2}^{r}\frac{r_k}{n_k} \binom{r}{k} V_k, \\
&\leq \frac{r_2}{n_2} \sum_{k=2}^{r} \binom{r}{k} V_k, \\
&\leq \frac{r_2}{n_2} \var[\wt{T}],
\end{align*}
where $V_k = \var(T_k)$ and $r_k = r(r-1)\cdots (r-k)$. Lastly, $r_2/n_2\le r^2/n^2$.

Write $\var(\mathring{T}_{n,r,B}) = v_n^2/n = r_n^2\xi_{1,r_n}/n$, we have
\begin{align}
    \frac{1}{\var(\mathring{T}_{n,r,B})} \E \left[ \left({T}_{n,r,B} - \mathring{T}_{n,r,B}\right)^2 \right] &\le \left(\frac{r}{n} \right)^2 \frac{\var[\wt{T}]}{r_n^2\xi_{1,r_n}/n} \\
    &= \frac{1}{n} \var[\wt{T}]/ \xi_{1,r_n}\\
    &\rightarrow 0. \label{eq_diff}
\end{align}
Apply Slutsky's theorem to (\ref{T_ring}) and (\ref{eq_diff}), we get
\begin{equation*}
    \frac{\sqrt{n} {T}_{n,r,B}}{\sqrt{r_n^2 \xi_{1,r_n}} }\xrightarrow{d} N(0,1).
\end{equation*}

{\bf Step 3. Consistency of the variance estimator.}
We are only left to show $ n\wh{V}^B/ v_n^2 \stackrel{p}{\rightarrow} 1$, or equivalently  $ \wh{V}^B/ \var(\mathring{T}_{n,r,B}) \stackrel{p}{\rightarrow} 1$.

From its definition, $\var(\mathring{T}_{n,r,B})$ can be written as
	\begin{equation}\label{sigma2}
	    \begin{aligned}
	        \delta_n^2  &= \sum_{i=1}^{n}\left(\E[\wh{y}_*^B | Z_i] - \E[\wh{y}_*^B] \right)^2 \\
	        &= \frac{r^2}{n^2}\sum_{i=1}^{n}\left(\E[\wt{y}_*^b | Z_i] - \E[\wt{y}_*^b] \right)^2. 
	    \end{aligned}
	\end{equation}
 This is because $\E[\wt{y}_*^b | Z_i] - \E[\wt{y}_*^b] = 0$ for those subsamples where $Z_i$ is not sampled.
	On the other hand, we write the infinitesimal jackknife estimate as
	\begin{equation} \label{V_inf}
	    \wh{V}^B = \frac{n-1}{n} \left(\frac{n}{n-r} \right)^2 \frac{r^2}{n^2}\sum_{i=1}^{n}\left(\E^*[\wt{y}_*^b | Z_1^* = Z_i] - \E^*[\wt{y}_*^b] \right)^2.
	\end{equation}
	It is worth noting that that $\E$ is with respect to the true (theoretical) distribution that generates  $Z_1, Z_2, \ldots,Z_n$, whereas $\E^*$ is with respect to the empirical distribution based on $Z_1, Z_2, \ldots,Z_n$.
	
	Equation (\ref{V_inf}) can further be decomposed using the H\'ajek projection $\mathring{y}^*$ of $\wt{y}_*^b$:
	\begin{equation}\label{eq_AR}
	    \wh{V}^B = \frac{n-1}{n} \left(\frac{n}{n-r} \right)^2 \frac{r^2}{n^2}\sum_{i=1}^{n}\left(A_i + R_i \right)^2,
	\end{equation}
	where
	\begin{gather*}
	    A_i = \E^*[\mathring{y}^*| Z_1^* = Z_i] - \E^*[\mathring{y}^*], \\
	    R_i = \E^*[\wt{y}_*^b - \mathring{y}^* | Z_1^* = Z_i] - \E^*[\wt{y}_*^b - \mathring{y}^*].
	\end{gather*}
	We will show the main effects $A_i$'s give us $\delta_n^2$ and the sum of $R_i$'s goes to zero. {Therefore, $\lim_{n\rightarrow \infty} \wh{V}^B/ \var[\mathring{T}_{n,r,B}] = 1$.} 
To show it, we write 
	\begin{equation*}
	    \begin{aligned}
	        A_i &= \E^*[\mathring{y}^*| Z_1^* = Z_i] - \E^*[\mathring{y}^*] \\
	        &= \left( 1 - \frac{r}{n}\right) T_1(Z_i) + \left( \frac{r-1}{n-1} - \frac{r}{n}\right) \sum_{j\ne i} T_1(Z_j),
	    \end{aligned}
	\end{equation*}
    where $T_1(Z_i) = \E[\wt{y}_*^b | Z_i] - \E[\wt{y}_*^b]$, and further
        \begin{equation*} 
         \begin{aligned}
	    \E\left[ \frac{n-1}{n} \left(\frac{n}{n-r} \right)^2 \frac{r^2}{n^2}\sum_{i=1}^{n}A_i^2 \right] =& \frac{r^2}{n^2} \sum_{i=1}^{n}  T_1(Z_i)^2\\
     =& \delta_n^2.
        \end{aligned}
	\end{equation*}
 From the above calculation we can also get $\delta_n^{-2}\frac{r^2}{n^2}\sum_{i=1}^n(A_i - T_1(Z_i))^2 \rightarrow_p 0$. Thus by the weak law of large numbers for triangular arrays for $ \frac{1}{\delta_n^2} \frac{r^2}{n^2}\sum_{i=1}^{n}T_1(Z_i)^2$, we have
    \begin{equation*} 
	  \frac{1}{\delta_n^2} \frac{r^2}{n^2}\sum_{i=1}^{n}A_i^2  \rightarrow_p 1.
	\end{equation*}

 On the other hand, we can get the following bound following Lemma 13 of \cite{wager2018estimation},
 \begin{equation*}
     \E [R_i^2] \lesssim \frac{2}{n}\var\left[\wt{y}_*^b \right],
 \end{equation*}
 which leads to
  \begin{equation*}
   \begin{aligned}
     \E \left[\frac{r^2}{n^2} \sum R_i^2\right] \lesssim&
     \frac{2r^2}{n^2}\var\left[\wt{y}_*^b \right] \\
     \lesssim& \frac{2r}{n } \delta_n^2.
      \end{aligned}
 \end{equation*}
 The last step used the decomposition (\ref{sigma2}) and $\delta_n^2 = \frac{r}{n} \var\left[\wt{y}_*^b \right]$. 
 Then by Markov's inequality, as $r/n \rightarrow 0$,
 $$
 \frac{1}{\delta_n^2}\frac{r^2}{n^2} \sum R_i^2 \xrightarrow{p} 0.
 $$
 Lastly, applying $\left(A_i + R_i \right)^2 \le 2(A_i^2 + R_i^2)$ {to (\ref{eq_AR})}, we conclude that $ \wh{V}^B/\delta_n^2$ converges in probability to 1.

\end{proof}

\begin{proof}[Proof of Corollary \ref{thm_var}]
With the results of Step 3 in the proof of Theorem \ref{thm_normality} and by using the Slutsky theorem, we have that
    \begin{gather*}
        \frac{ \wh{y}_*^B - f_0(\bx_*)}{\wh{\sigma}_* }  \xrightarrow{d} N(0,1).
    \end{gather*}
    Therefore, 
    \begin{gather*}
       \Pr\left[ f_0(\bx_*) \in \left( \wh{L}(\bx_*), \wh{U}(\bx_*) \right) \right] = \Pr\left[ \left|\frac{ f_0(\bx_*) - \wh{y}^B_*}{\wh{\sigma}_* } \right| < z_{1-\alpha/2} \right] \\
       \rightarrow 
       \Pr [|N(0,1)|<  z_{1-\alpha/2}]=1 - \alpha.
    \end{gather*}
    
\end{proof}

\section*{Proof of DNN U-learning Theorem \ref{thm_NN}}
{The proof of Theorem \ref{thm_NN} uses arguments for incomplete generalized U-statistics with random kernels, i.e., each $\wt{f}^{b_j}(\bx)$ is not deterministic given the subsample of training data. The key is that the randomness in $\wt{f}^{b_j}(\bx)$ is independent of the original samples. Our proof takes advantage of the U-statistic property that the asymptotic normality does not dependent on the exact form of the kernel, which is the NN fit in this case. }

\begin{proof}[Proof of Theorem \ref{thm_NN} ] 
 Denote $\omega_j$ as the randomization parameters (SGD, random dropout, etc) involved in fitting the NN with the $j$-th subsample in Algorithm \ref{alg_nn}.
 \begin{gather*}
     \wt{f}^{b_j}(\bx) = \wt{f}^{(\omega_j)}\left( \bx|L, \bp, \cT^{b_j} \right) \\
     \wh{y}^B_* = \frac{1}{B}\sum_{j} \wt{f}^{b_j}(\bx_*) = \frac{1}{B}\sum_{j} \wt{f}^{(\omega_j)}\left( \bx_*|L, \bp,\cT^{b_j} \right).
 \end{gather*}
 Suppose $\omega_1, \ldots, \omega_B \sim_{i.i.d.} F_\omega$, and they are independent of the observations $\cT_n$ and the random subsampling. Consider the statistic \(  \wh{y}^{*B}_* = \E_\omega \wh{y}^B_* \)
 by taking the expectation with respect to $\omega$.  Then \( \wh{y}^{*B}_*\) is an incomplete generalized U-statistic by definition. 

In order to apply Theorem \ref{thm_normality} to  $\wh{y}^{*B}_*$, we verify the required conditions as follows. First, \ref{D1} is analogous to \ref{C1}, \ref{D2} and \ref{C2} are identical, \ref{D3} and \ref{D4} together bound the tail behavior of the error term and the predictions $\wt{f}^{b_j}(\bx_*)$'s. By Lemma \ref{lem_nn}, 
\[
R(\wt{f}^{b_j}, f_0 ) \le  C'\phi_{n} L \log^2 r_n.
\]
Equivalently, 
\[
\E\left[ \left( \wt{f}^{b_j}(\bx_*) - f_0(\bx_*) \right)^2 \right] \le  C'\phi_{n} L \log^2 r_n.
\]
Then
\begin{align*}
    \E\left( \wh{y}^{*B}_* - f_0(\bx_*) \right)^2 &= 
    \E\left( \E_\omega \wh{y}^B_* - f_0(\bx_*) \right)^2\\
    &= \E \left[ \E_\omega \left(  \wh{y}^B_* - f_0(\bx_*) \right)^2 \right] \\
    &= \E \left[ \E_\omega \left(  \frac{1}{B}\sum_{j=1}^B \wt{f}^{b_j}(\bx_*) - f_0(\bx_*) \right)^2 \right] \\
    &\le \E \left[ \E_\omega \frac{1}{B}  \sum_{j=1}^B \left( \wt{f}^{b_j}(\bx_*) - f_0(\bx_*) \right)^2 \right] \\
    &\le  C'\phi_{n} L \log^2 r_n \\
    &\lesssim  \phi_{n}(\log^{\alpha}n)  (\log^2 r_n)   \\
      &\lesssim  \phi_{n} \log^{2+\alpha} n,
\end{align*}
which goes to zero 
by Lemma \ref{lem_nn}. Here, the last inequality is because of  \ref{D2}.
 Therefore, $\E \wh{y}^{*B}_* \rightarrow f_0(\bx_*)$. Let 
 \[
\xi_{1,r_n}(\bx_*) = \cov\left(\wt{f}^{(\omega)}(\bx_*; Z_1,Z_2,..,Z_{r_n}), \wt{f}^{(\omega)}(\bx_*; Z_1,Z_2'..,Z_{r_n}') \right),
\]
where the covariance is taken over $\omega$ as well.
Since $\wt{f}^{(\omega)} \in \cF(L,\bp, s,F)$ with a uniform upper bound $F$, $\E \{\wt{f}^{(\omega)}(\bx)\}^2 \le F^2 < \infty$. Hence, the non-random version \(  \wh{y}^{*B}_* = \E_\omega \wh{y}^B_* \) satisfies all the conditions of Theorem \ref{thm_normality}, and we can obtain
 \begin{equation*}
	    \frac{\sqrt{n}\left(\wh{y}^{*B}_* - f_0(\bx_*) \right) }{ v_{n} (\bx_*) } \xrightarrow{d} N(0,1),
	\end{equation*}
with $v_n(\bx_*) = \sqrt{r_n^2 \xi_{1,r_n}(\bx_*) }$. 
 Therefore, in order to show the asymptotic normality for $\wh{y}^B_*$, we are left to show 
 \begin{equation}\label{eq_Eomega}
    \frac{ \sqrt{n}\left( \wh{y}^{*B}_* -  \wh{y}^B_*  \right)}{v_n(\bx_*) } \xrightarrow{P} 0.
 \end{equation}
 
To proceed, we first  write out the following expression, and derive 
\begin{equation*}
    \begin{aligned}
        \E \left( \wh{y}^{B}_* -  \wh{y}^{*B}_*  \right)^2 =& \E \left[\frac{1}{B}\sum_{j} \wt{f}^{(\omega_j)}\left( \bx_*|\cT^{b_j} \right) - \E_\omega\left(\frac{1}{B}\sum_{j} \wt{f}^{(\omega_j)}\left( \bx_*|\cT^{b_j} \right) \right) \right]^2 \\
      =& \E  \left[ \frac{1}{B^2} \sum_{j} \left( \wt{f}^{(\omega_j)}\left( \bx_*|\cT^{b_j} \right) - \E_\omega \wt{f}^{(\omega_j)}\left( \bx_*|\cT^{b_j} \right) \right)^2 \right]\\ 
  =& \frac{1}{B^2} \sum_{j}\E \left( \wt{f}^{(\omega_j)}\left( \bx_*|\cT^{b_j} \right) - \E_\omega \wt{f}^{(\omega_j)}\left( \bx_*|\cT^{b_j} \right) \right)^2\\
   =& \frac{1}{B} \E \left( \wt{f}^{(\omega_j)}\left( \bx_*|\cT^{b_j} \right) - \E_\omega \wt{f}^{(\omega_j)}\left( \bx_*|\cT^{b_j} \right) \right)^2.
    \end{aligned}
\end{equation*}

Then for any $\epsilon_0>0$, applying the Chebyshev inequality gives 
\[
\Pr \left( \left | \frac{\sqrt{n}\left( \wh{y}^{*B}_* -  \wh{y}^B_*  \right)}{ v_n(\bx_*) }\right | > \epsilon_0\right)
\le \frac{1}{\epsilon_0^2   v^2_n(\bx_*) }  \E \{n\left( \wh{y}^{*B}_* -  \wh{y}^B_*  \right)^2\}.
\]

On the other hand,  we have
\begin{equation*}
    \begin{aligned}
        &  \frac{1}{ \epsilon_0^2 v^2_n(\bx_*) }  \E \{n\left( \wh{y}^{*B}_* -  \wh{y}^B_*  \right)^2\}  \\
        =& \frac{1}{\epsilon_0^2}  \frac{1}{r_n^2 \xi_{1,r_n}(\bx_*)} \frac{n}{B}
        \E  \left( \wt{f}^{(\omega_j)}\left( \bx_*|\cT^{b_j} \right) - \E_\omega \wt{f}^{(\omega_j)}\left( \bx_*|\cT^{b_j} \right) \right)^2 \\
         \rightarrow & 0,
    \end{aligned}
\end{equation*}
which holds because, by assumptions,  $n/B \rightarrow 0$, $\liminf \xi_{1,r_n}(\bx_*) > 0$,  and  \[
\lim_{n\rightarrow \infty} \E  \left( \wt{f}^{(\omega_j)}\left( \bx_*|\cT^{b_j} \right) - \E_\omega \wt{f}^{(\omega_j)}\left( \bx_*|\cT^{b_j} \right) \right)^2 < \infty.
\] 
Therefore,  
$
\Pr \left( \left | \frac{\sqrt{n}\left( \wh{y}^{*B}_* -  \wh{y}^B_*  \right)}{ v_n(\bx_*) }\right | > \epsilon_0\right)
 \rightarrow 0$ for any $\epsilon_0>0$. Hence (\ref{eq_Eomega}) holds, and the proof is completed by applying the Slutzky theorem.


 \end{proof}

\end{appendix}



%
%

\begin{acks}[Acknowledgments]
This work is partially supported by NIH grants. We are grateful toward Dr. Steve  Horvath for providing the DNA Methylation data from the Mammalian Methylation Consortium and insightful discussions.
\end{acks}
%


\bibliographystyle{imsart-number} 
\bibliography{mybib}      




\end{document}